\documentclass[10pt,letterpaper]{article}
\usepackage[T1]{fontenc}
\usepackage[utf8]{inputenc}
\usepackage[margin=1in,headheight=22pt,headsep=18pt,footskip=28pt]{geometry}
\usepackage{newtxtext}
\usepackage{amsmath,amssymb,amsthm}
\usepackage{newtxmath}
\usepackage{amsmath,amsfonts,bm}

\def\eqref#1{equation~\ref{#1}}
\def\1{\bm{1}}

\DeclareMathAlphabet{\mathsfit}{\encodingdefault}{\sfdefault}{m}{sl}
\SetMathAlphabet{\mathsfit}{bold}{\encodingdefault}{\sfdefault}{bx}{n}

\usepackage[final]{microtype}
\usepackage[scaled=0.92]{helvet}
\usepackage[table]{xcolor}
\usepackage{graphicx,booktabs,multirow,array}
\usepackage{placeins}
\usepackage{algorithm}
\usepackage[noend]{algpseudocode}
\usepackage[round,authoryear]{natbib}
\usepackage{url}
\usepackage{etoolbox}
\usepackage{fontawesome5}
\usepackage{titlesec}
\usepackage{fancyhdr}
\usepackage{caption}
\usepackage{pgfplots}
\pgfplotsset{compat=1.17}
\usepgfplotslibrary{fillbetween}
\usepackage{tikz}
\usepackage[skins,breakable]{tcolorbox}

\definecolor{goldline}{HTML}{987225}
\definecolor{goldrow}{HTML}{FAF3DB}
\definecolor{panelrow}{HTML}{EEEEEA}
\definecolor{lossred}{HTML}{B45D50}
\definecolor{gaingreen}{HTML}{257254}
\definecolor{flatgray}{HTML}{74766F}
\definecolor{inkblue}{HTML}{31596A}
\definecolor{gmInk}{HTML}{222C30}
\definecolor{gmPaper}{HTML}{FAF7EE}
\definecolor{gmRule}{HTML}{D9CFB5}
\usepackage[colorlinks=true,linkcolor=inkblue,citecolor=gaingreen,urlcolor=inkblue,
  pdfborder={0 0 0},bookmarksnumbered=true]{hyperref}
\hypersetup{pdftitle={Fine-Tuning Diffusion Language Models with Context Selection and Target Weighting},
  pdfsubject={GoldiMask},pdfkeywords={diffusion language models, supervised fine-tuning, submodular optimization}}
\newcommand{\GoldiMaskAuthors}{%
  \mbox{Loay Mualem\textsuperscript{1,2}}\quad
  \mbox{Llu\'{i}s Pastor-P\'{e}rez\textsuperscript{2}}\quad
  \mbox{Vinh Tong\textsuperscript{2}}\quad
  \mbox{Andrei Manolache\textsuperscript{1,2,3}}\\[4pt]
  \mbox{Tanja Bien\textsuperscript{2}}\quad
  \mbox{Steffen Staab\textsuperscript{2}}\quad
  \mbox{Mathias Niepert\textsuperscript{1,2}}}
\hypersetup{pdfauthor={Loay Mualem, Lluís Pastor-Pérez, Vinh Tong, Andrei Manolache, Tanja Bien, Steffen Staab, Mathias Niepert}}
\newcommand{\GoldiMaskAffiliations}{%
  \textsuperscript{1}IMPRS-IS\quad
  \textsuperscript{2}Institute for AI, University of Stuttgart\quad
  \textsuperscript{3}Bitdefender}
\newcommand{\GoldiMaskContact}{loay.mualem@ki.uni-stuttgart.de}
\newcommand{\GoldiMaskCodeURL}{}
\newcommand{\GoldiMaskDate}{September 26, 2026}
\newcommand{\GoldiMaskTitle}{Fine-Tuning Diffusion Language Models\\
with Context Selection and Target Weighting}

\titleformat{\section}{\large\bfseries\color{gmInk}}{\thesection}{0.65em}{}
\titleformat{\subsection}{\normalsize\bfseries\color{gmInk}}{\thesubsection}{0.65em}{}
\titleformat{\subsubsection}{\normalsize\bfseries}{\thesubsubsection}{0.65em}{}
\titleformat{\paragraph}[runin]{\normalsize\bfseries}{}{0pt}{}
\titlespacing*{\section}{0pt}{17pt plus 3pt minus 2pt}{7pt}
\titlespacing*{\subsection}{0pt}{12pt plus 2pt minus 1pt}{5pt}
\titlespacing*{\subsubsection}{0pt}{10pt}{4pt}
\titlespacing*{\paragraph}{0pt}{7pt}{0.6em}
\AtBeginEnvironment{table}{\ifcsdef{gmResultSD}{\let\gmResultSD\gmArxivResultSD}{}}
\newcommand{\gmRunningWordmark}{%
  \begin{tikzpicture}[baseline=-0.7pt,x=0.46pt,y=0.46pt]
    \foreach \x/\y in {0/14,7/14,14/14,0/7,7/7,14/7,0/0,7/0,14/0}
      \draw[draw=goldline!45,line width=0.25pt,rounded corners=0.4pt,fill=white]
        (\x,\y) rectangle ++(5,5);
    \foreach \x/\y in {0/14,7/7,14/0}
      \fill[goldline,rounded corners=0.4pt] (\x,\y) rectangle ++(5,5);
    \fill[goldline!35,rounded corners=0.4pt] (14,14) rectangle ++(5,5);
  \end{tikzpicture}\hspace{4pt}%
  {\sffamily\bfseries\small\textcolor{goldline}{Goldi}\textcolor{gmInk}{Mask}}}
\renewcommand{\headrulewidth}{0.35pt}
\renewcommand{\headrule}{\hbox to\headwidth{\color{gmRule}\leaders\hrule height \headrulewidth\hfill}}
\fancypagestyle{plain}{\fancyhf{}\fancyfoot[C]{\sffamily\small\textcolor{flatgray}{\thepage}}\renewcommand{\headrulewidth}{0pt}}

\newtheorem{proposition}{Proposition}

\newtheorem{corollary}{Corollary}

\newtcolorbox{gmtakeaway}[1]{enhanced,breakable,
  colback=gmPaper,colframe=gmRule,boxrule=0.4pt,
  borderline west={2pt}{0pt}{goldline},arc=2pt,
  left=9pt,right=9pt,top=6pt,bottom=6pt,
  before skip=10pt,after skip=10pt,
  before upper={\textcolor{goldline}{\textbf{Takeaway~#1.}}\enspace}}
\newcommand{\loaynote}[1]{}
\newcommand{\mn}[1]{}
\newcommand{\vinh}[1]{}
\newcommand{\lpp}[1]{}

\begin{document}
\thispagestyle{plain}
\begin{tcolorbox}[enhanced,colback=gmPaper,colframe=gmRule,
  boxrule=0.45pt,arc=3pt,left=16pt,right=16pt,top=17pt,bottom=14pt,
  borderline north={2pt}{0pt}{goldline},before skip=8pt,after skip=18pt]
\begin{center}
{\sffamily\bfseries\fontsize{23}{26}\selectfont
\begin{tikzpicture}[baseline=-1.3pt,x=1pt,y=1pt]
  % A compact token grid: visible context in gold, masked positions in outline.
  \foreach \x/\y in {0/14,7/14,14/14,0/7,7/7,14/7,0/0,7/0,14/0}
    \draw[draw=goldline!45,line width=0.45pt,rounded corners=1pt,fill=gmPaper]
      (\x,\y) rectangle ++(5,5);
  \foreach \x/\y in {0/14,7/7,14/0}
    \fill[goldline,rounded corners=1pt] (\x,\y) rectangle ++(5,5);
  \fill[goldline!35,rounded corners=1pt] (14,14) rectangle ++(5,5);
\end{tikzpicture}\hspace{7pt}%
\textcolor{goldline}{Goldi}\textcolor{gmInk}{Mask}\par}
\vspace{7pt}
{\bfseries\fontsize{19}{23}\selectfont\textcolor{gmInk}{\GoldiMaskTitle}\par}
\vspace{10pt}
\ifdefempty{\GoldiMaskAuthors}{}{%
  {\normalsize\bfseries\GoldiMaskAuthors\par}%
  \ifdefempty{\GoldiMaskAffiliations}{}{\vspace{5pt}{\small\GoldiMaskAffiliations\par}}%
}
\end{center}
\vspace{5pt}
{\sffamily\small\bfseries\textcolor{goldline}{ABSTRACT}}\par
Supervised fine-tuning of discrete diffusion language models masks some response tokens and trains the model to recover their original values from the visible context. The masking pattern therefore determines both the context available to the model and the tokens it learns to predict. Uniform random masking does not explicitly account for the interaction between these choices. We introduce GoldiMask, which selects tokens to reveal as context by approximately maximizing a submodular objective. This objective uses model signals to balance the benefit of revealing tokens against their value as prediction targets. GoldiMask then weights the remaining targets according to how they benefit from the selected context and their remaining learning potential. Across three backbones and three training datasets, GoldiMask achieves the highest average accuracy in most evaluated settings, demonstrating gains on both reasoning and code generation. Component ablations show that both context selection and target weighting contribute to the gains. GoldiMask also reduces decoding iterations on GSM8K and MATH-500 under confidence-threshold parallel decoding, while maintaining comparable accuracy at higher confidence thresholds. Project page with interactive visualizations: \url{https://loaym.github.io/GoldiMask/}.
\par\vspace{8pt}
{\color{gmRule}\hrule height 0.4pt}
\vspace{7pt}
{\footnotesize\textbf{Keywords}\enspace Diffusion language models\enspace\textcolor{goldline}{\textbullet}\enspace
Supervised fine-tuning\enspace\textcolor{goldline}{\textbullet}\enspace Submodular optimization\par}
\par\vspace{5pt}
{\small
\renewcommand{\arraystretch}{1.25}
\begin{tabular}{@{}c@{\hspace{6pt}}l@{}}
\textcolor{goldline}{\faIcon{calendar-alt}} & \textbf{Date:} \GoldiMaskDate \\
\textcolor{goldline}{\faGithub} & \textbf{Code:} \ifdefempty{\GoldiMaskCodeURL}{Available soon.}{\url{\GoldiMaskCodeURL}} \\
\textcolor{goldline}{\faGlobe} & \textbf{Project page:} \url{https://loaym.github.io/GoldiMask/} \\
\textcolor{inkblue}{\faEnvelope} & \textbf{Contact:} \href{mailto:\GoldiMaskContact}{\GoldiMaskContact}
\end{tabular}\par}
\end{tcolorbox}

\section{Introduction}

Diffusion Large Language Models (DLLMs) have been recently proposed as an alternative to Autoregressive Large Language Models (ARLMs). ARLMs predict tokens one after the other, while DLLMs can predict multiple tokens in parallel, resulting in two advantages; parallel decoding~\citep{chen2026dflash,wu2026fastdllm,chen2026dparallel}, and revoking earlier decisions, which is especially valuable in domains like robotics \citep{liang2026discrete-robot}.

% Masked diffusion large language models generate text by repeatedly predicting missing tokens from partial context, allowing multiple tokens to be decoded in parallel. Supervised fine-tuning (SFT) trains this ability by masking part of a response and reconstructing it from the visible tokens. The mask therefore defines the learning problem: it determines both the evidence available to the model and the targets it must predict. Choosing useful context and choosing useful supervision are coupled decisions.

As with ARLMs, recent works have explored post-training in DLLMs \citep{nie2025large-posttraining,ye2025dream7bdiffusionlarge} in order to improve their performance and their reasoning capabilities. During supervised fine-tuning (SFT), some tokens are masked independently, and the model learns to reconstruct the full text. This masking process is generally uniformly at random. The mask therefore defines the learning problem: it determines both the evidence available to the model and the targets it must predict. Choosing useful context and choosing useful supervision are coupled decisions. A random split pairs the two blindly, which makes SFT in DLLMs more challenging than in ARLMs, and a poorly chosen masking scheme can even reduce the base model's accuracy~\citep{parashar2026learnabilityinformed-lift,piskorz2025masks}.

Recent methods adapt different aspects of diffusion SFT, including the masking rate~\citep{xu2026gift}, the choice of revealed tokens~\citep{parashar2026learnabilityinformed-lift}, and the weighting of prediction losses~\citep{ye2025dream7bdiffusionlarge,deng2026beyond-agdo}. These methods use signals such as entropy, confidence, position, or attention to guide training. However, revealing a token changes both the context available to other targets and which tokens remain supervised, motivating a joint treatment of context selection and target weighting.

% Uniform random masking does not account for this coupling. It can hide evidence needed by the remaining targets or spend supervision on tokens already clear from context. Recent approaches adapt the masking rate, the revealed positions, or the loss weights (Xu et al., 2026; Parashar et al., 2026a; Ye et al., 2025; Deng et al., 2026). We bring reveal selection and loss weighting together around a simple principle: provide context that helps valuable targets, then emphasize the targets that benefit from that context and still have room to improve.

To address this coupling, we introduce GoldiMask, an SFT strategy that first selects which tokens to reveal and then determines how strongly to supervise the remaining targets. At a sampled masking ratio, GoldiMask scores a reveal set by the support it provides to the tokens left masked, weighted by their supervision priority. This priority favors intermediate gold-token probabilities (probabilities of correct tokens), balancing local probability responsiveness with remaining prediction headroom. Support is estimated from attention to the revealed positions, with diminishing returns as more context is supplied.
% Revealing a token also removes it from the supervised targets, so the objective accounts for both the context it provides and the supervision it gives up.
This selection is interaction-aware, since the value of revealing a token depends on the context already selected and the supervision it gives up.
We formulate this decision as a non-monotone submodular set-selection problem.

GoldiMask then evaluates the remaining targets with the selected context. It assigns larger loss weights to tokens whose gold-token probability increases and whose updated confidence still indicates useful supervision. Capping and normalizing these weights limits concentration while preserving their total mass. Together, selection and weighting allocate context and supervision within each training example.

We evaluate GoldiMask on reasoning and code-generation tasks across three backbones and three training datasets. GoldiMask achieves the highest average accuracy in most evaluated settings, with ablations supporting the contributions of adaptive context selection and target weighting.

GoldiMask's focus on learning from partial context also motivates evaluating parallel decoding. Predicting several tokens in one iteration requires the model to use the currently available context without waiting for the other predictions to be revealed. We therefore investigate whether coordinating context and supervision during training enables more tokens to be decoded per iteration while maintaining accuracy. On GSM8K and MATH-500, GoldiMask reduces decoding iterations, often maintaining comparable accuracy (Section~\ref{sec:parallel-decoding}).

% This approach is particularly relevant to parallel decoding, where several unresolved tokens must be predicted from the same partial context. Across evaluated settings spanning three backbones and two SFT datasets, GoldiMask improves average benchmark accuracy over the compared baselines, with ablations showing contributions from both selection and weighting. Our parallel-decoding evaluations further demonstrate that the benefits extend to decoding behavior, connecting the design of SFT supervision to the model's ability to generate tokens in parallel.

% \vinh{The contributions need to be rewritten. You have 3 bullet points, (i) in the first point you only talk about the \textit{learnability curve} - how about support curve - which is also super-important. I would suggest frame the first contribution as a \textit{teaching-value curve + support curve}. This better matches the actual method terminologies. (ii) the second point is framed mainly as corruption design as set selection, which weighting mentioned afterward. Since weigthing is empirically crucial, so it should have equal status with selection rather than secondary. I would replace this second bullet point with a two-stage allocation rule - reveal selection + realized gain weighting. (iii) I would keep but add more about parallel decoding stuff. }
%
\paragraph{Our contributions.}
\begin{itemize}
\item We formulate masking in diffusion SFT as a coupled subset-selection problem and introduce GoldiMask, combining non-monotone submodular reveal selection with response-based, budget-preserving loss weighting.
\item We evaluate GoldiMask across three backbones and three training datasets, achieving the highest average accuracy in most evaluated reasoning and code-generation settings. Ablations support the contributions of adaptive selection and target weighting and examine the choice of supervision-priority function.
\item We evaluate parallel decoding across three backbones. Under confidence-threshold decoding, GoldiMask reduces decoding iterations on GSM8K and MATH-500, often maintaining comparable accuracy.
% The theoretical analysis and quantitative parallel-decoding results will be
% supplied in the reserved subsections; retain this claim for their integration.
\end{itemize}

\section{Related Work}
%\loaynote{Loay is reviewing this section in the end. Lluis/Vinh please double check things and probably cut things if any of you see this message}
\label{sec:related}

In what follows, we review diffusion language models and recent approaches to their post-training. We refer the reader to Appendix~\ref{app:related} for additional related work.

\paragraph{Diffusion language models.} Motivated by their success in continuous domains such as image generation \citep{diffusion1,dhariwal2021diffusion}, diffusion language models have appeared as an alternative paradigm to autoregressive language modeling. Diffusion language models operate either in continuous token-embedding spaces~\citep{hu2026elf,chen2026langflow} or directly over discrete tokens. Discrete diffusion language models have already shown competitive performance with similarly sized autoregressive models on tasks such as code generation and mathematical reasoning \citep{nie2025large-posttraining,zhu2025llada15,ye2025dream7bdiffusionlarge}. In this work, we focus on masked discrete diffusion language models, where the choice of which tokens are revealed is crucial for both inference and fine-tuning.

\paragraph{Post-Training.} Vanilla diffusion SFT randomly masks response tokens and trains the model to reconstruct them from the visible context. Recent work adapts this process to make supervision more effective. LIFT~\citep{parashar2026learnabilityinformed-lift} selects targets by ground-truth confidence rank using a noise-dependent schedule; GIFT~\citep{xu2026gift} assigns token-specific masking rates based on entropy; CART~\citep{ye2025dream7bdiffusionlarge} weights targets by proximity to visible context; and AGDO~\citep{deng2026beyond-agdo} uses attention to guide denoising and loss weighting. GoldiMask explicitly accounts for the coupling between context and supervision: revealing a token supports other targets but removes that token from supervision. We capture this trade-off in a submodular set objective, then weight the remaining targets by their measured response to the selected context and their supervision priority.

% Suggested methodology, preserving and consolidating Mathias's draft in
% sections/method_mathias.tex. Include this file instead of that source.
% AUTHOR CONFIRMED: piecewise rho sampling and uniform supervision when |C| <= K;
% residual weight is distributed uniformly when positive utilities cannot absorb it.
% AUTHOR CHECK: confirm the empty-supervision convention against the training code.

\section{Methodology}
\label{sec:method-new}

In this section, we present GoldiMask, a two-pass procedure for choosing
which response tokens to reveal and how strongly to supervise those that
remain masked. We first define the training setup and objective, then
describe reveal-set selection and target weighting.
Figure~\ref{fig:main-fig} illustrates the procedure. Algorithm~\ref{alg:goldimask}
in Appendix~\ref{app:main-algorithm} gives the pseudocode for one training step.

\begin{figure}[!htbp]
\centering
\includegraphics[width=\linewidth]{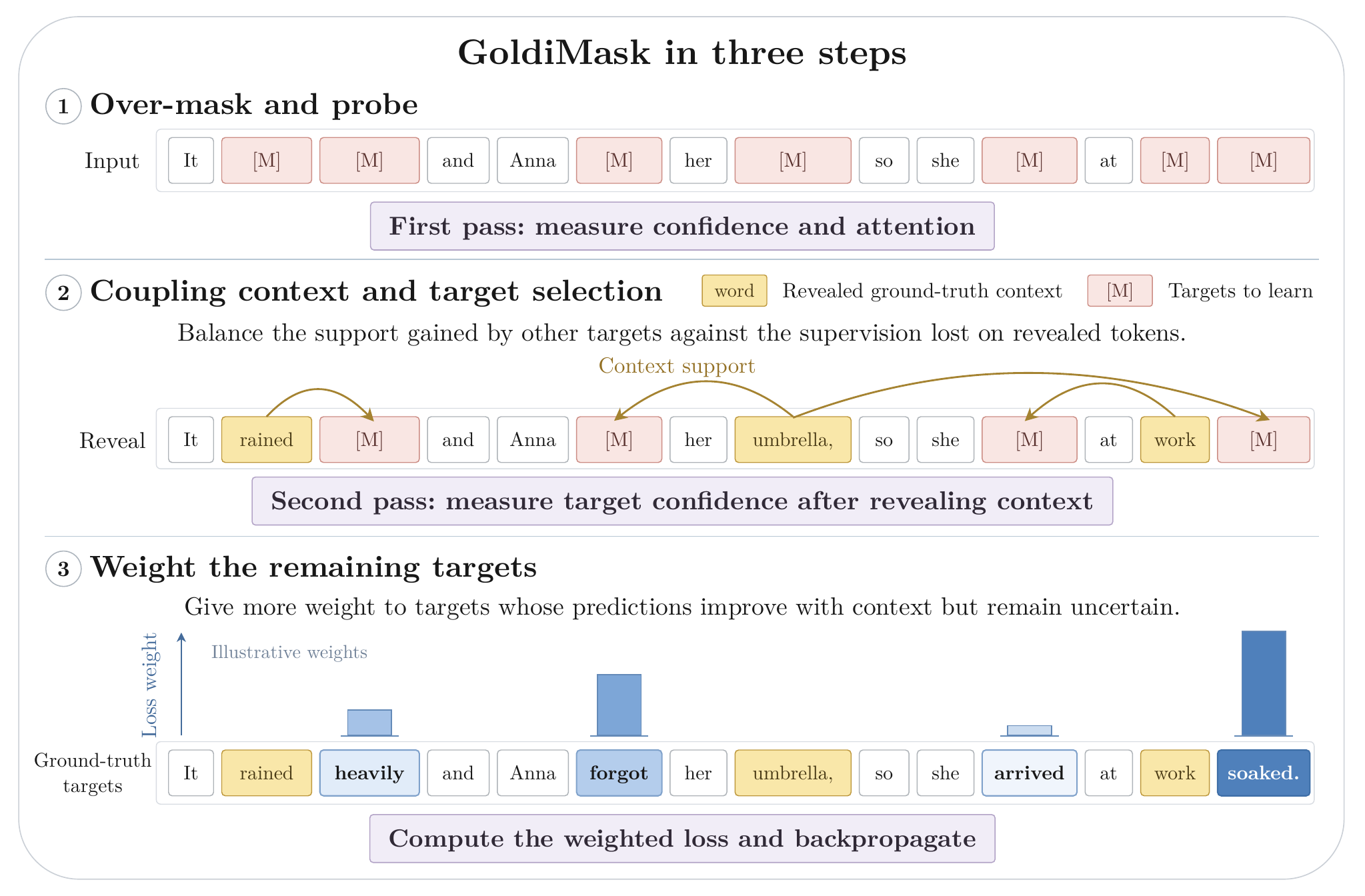}%{goldimask-main.drawio.pdf}
\caption{Overview of GoldiMask. A first forward pass on the over-masked
response provides ground-truth token probabilities and attention scores for
selecting a reveal set. A second forward pass evaluates the remaining
targets with the selected context. Their positive context gains and
post-reveal priorities determine loss weights through water-filling for the
training update. The supervision-priority function $\lambda$ and support
response $\phi$ remain fixed throughout fine-tuning.}
\label{fig:main-fig}
\end{figure}

\subsection{Setup and training objective}
\label{sec:setup}

In supervised fine-tuning, each example is a prompt--response pair
$x_0=(q,y)$, where $q$ denotes the prompt and $y$ the ground-truth response.
The prompt remains visible, and masking applies only to the response.
Let $[L]=\{1,\ldots,L\}$ index the $L$ supervisable response positions,
excluding padding and special tokens, and let $y^i$ denote the token at
position $i$. The noise level $t\sim\mathcal U(0,1)$ specifies the desired
final masking fraction.

We initially mask additional response tokens and then reveal a selected
subset~\citep{parashar2026learnabilityinformed-lift}. We draw
$\rho\sim\mathcal U(0.1,1-t)$ for $t<0.9$ and
$\rho\sim\mathcal U(0,1-t)$ otherwise, applying the same procedure in both
regimes. We independently replace each response token
with \texttt{[MASK]} with probability $t+\rho$, producing a partially masked
response $\widetilde y$ and input $\widetilde x=(q,\widetilde y)$.
Let $\mathcal C\subseteq[L]$ be the set of positions masked in this step.
The desired supervised count and corresponding reveal budget are
\begin{equation}
K=\lfloor tL\rfloor,\qquad B=|\mathcal C|-K.
\label{eq:gm-budgets}
\end{equation}
If $|\mathcal C|\leq K$, we reveal no tokens and apply uniformly weighted
supervision to all masked candidates. For $1\leq K<|\mathcal C|$, we select a subset
$R\subseteq\mathcal C$ of size $B$, called the \emph{reveal set}, and
replace each mask at position $i\in R$ with its ground-truth token $y^i$.
Denote the response after these replacements by $y_t^R$ and the
corresponding model input by $x_t^R=(q,y_t^R)$.
The positions $\mathcal S=\mathcal C\setminus R$ remain masked, and their
original tokens are the \emph{supervised targets}. All visible tokens,
including the prompt, the initially unmasked response tokens, and the
revealed tokens, form the \emph{context}.

Given the input $x_t^R$, we train the model to predict the ground-truth
tokens at the masked positions $\mathcal S$. Let $\theta$ denote the
model parameters and $p_\theta(y^i\mid x_t^R)$ the probability assigned
to the ground-truth token $y^i$. GoldiMask assigns each supervised position
a nonnegative weight $w_i$ that controls its contribution to the per-example
loss:
\begin{equation}
\ell_{\mathrm{GM}}(\theta)
=-\frac{1}{tL}\sum_{i\in\mathcal S}
w_i\log p_\theta(y^i\mid x_t^R).
\label{eq:loss}
\end{equation}
Standard masked SFT uses uniform token weights, $w_i=1$.
GoldiMask uses adaptive weights, normalized so that
\begin{equation}
\sum_{i\in\mathcal S}w_i=|\mathcal S|.
\label{eq:weight-budget}
\end{equation}
Thus, the average weight across supervised positions remains one.
The reveal set $R$ and the weights $w_i$ are held fixed during
differentiation.

\subsection{Selecting useful context}
\label{sec:selection}
\label{sec:objective}

Selecting $R$ balances the context supplied to other tokens against the
value of retaining the revealed tokens as prediction targets.
The first forward pass on $\widetilde x$, without gradient tracking,
provides ground-truth token probabilities
$p_i^{(1)}=p_\theta(y^i\mid\widetilde x)$ for $i\in\mathcal C$ and
attention scores $A_{ij}$ from position $i$ to $j$, averaged over heads
in the final attention block. We set $A_{ii}=0$ and hold these quantities
fixed during selection.

For each remaining target $i\in\mathcal C\setminus R$, its total attention
to the reveal set serves as a proxy for contextual support:
\begin{equation}
S_i(R)=\sum_{j\in R}A_{ij}.
\label{eq:support}
\end{equation}
We select $R$ to maximize the priority-weighted support of the remaining
targets:
\begin{equation}
F(R)=\sum_{i\in\mathcal C\setminus R}
\lambda\!\left(p_i^{(1)}\right)\phi\!\left(S_i(R)\right),
\qquad
R^\star\in\arg\max_{R\subseteq\mathcal C:\,|R|=B}F(R).
\label{eq:objective}
\end{equation}
Here, $\lambda(p_i^{(1)})$ scores the priority of retaining token $i$ as a
prediction target, while $\phi(S_i(R))$ estimates the benefit of its
contextual support. Revealing a token can increase support for other
targets but removes its own contribution from $F$.

For a ground-truth token probability $p\in[0,1]$, we use the supervision
priority \mbox{$\lambda(p)=p(1-p)^3$}, which peaks at $p=1/4$.
It combines the approximate probability
increase $p(1-p)^2$ under an idealized cross-entropy update with the remaining
probability gap $1-p$, favoring targets with room to improve
(Appendix~\ref{app:gm-lambda-theory}). Studies of supervision allocation
and context response appear in Appendices~\ref{app:gm-training-study}
and~\ref{app:gm-lambda-evidence}, respectively.

For a nonnegative attention-support value $s$, we use
$\phi(s)=s/(s+s_0)$, with $s_0=0.05$.
This increasing, saturating response assigns less value to additional
support for targets that already receive substantial context.
Appendices~\ref{app:gm-support-theory} and~\ref{app:gm-support} give its
diminishing-returns properties and empirical support fits. Both functions
remain fixed throughout training; their probability and attention inputs
are recomputed for each example.

With fixed nonnegative priorities and attention weights, the nondecreasing,
concave $\phi$ makes $F$ submodular but generally non-monotone.
The proof appears in Appendix~\ref{app:gm-selection-properties}.
We evaluate four submodular maximization algorithms, including practical
adaptations of established methods. The main results report deterministic
greedy (GoldiMask-D) and randomized greedy (GoldiMask-R)~\citep{buchbinder2014submodular}.
Appendix~\ref{app:arms} describes all four selectors and their implementation
adaptations.

\subsection{Weighting the remaining targets}
\label{sec:weighting}

After selecting $R$, a second forward pass on $x_t^R$ provides updated
ground-truth token probabilities
$p_i^{(2)}=p_\theta(y^i\mid x_t^R)$ for $i\in\mathcal S$.
We reuse $\lambda$ to assess the remaining targets' priority under this
context and combine it with their positive log-probability gains:
\begin{equation}
u_i=\big[\log p_i^{(2)}-\log p_i^{(1)}\big]_+
\lambda\!\left(p_i^{(2)}\right),
\qquad [a]_+=\max(a,0).
\label{eq:utility}
\end{equation}
The first factor identifies targets helped by the reveal. The second
favors targets that still merit supervision after receiving that help.
Selection evaluates priority before the context has been chosen, whereas
weighting evaluates it under the context used for training. Both
probabilities are computed with the same model parameters, so the
log-probability difference measures the response to the reveal before
the training update.

We allocate the total weight $n=|\mathcal S|$ in proportion to $u_i$,
subject to a per-token cap $c=10$. Whenever the positive utilities can
absorb the budget, water-filling gives
\begin{equation}
w_i=\min\!\left(\frac{u_i}{\nu},c\right),\qquad
\sum_{i\in\mathcal S}\min\!\left(\frac{u_i}{\nu},c\right)=n.
\label{eq:waterfill}
\end{equation}
The water-filling level $\nu$ is found by bisection in the nondegenerate case.
The cap limits weight concentration while water-filling preserves the
total loss-weight mass. Appendix~\ref{app:gm-weight-properties} gives
the allocation's characterization and boundary cases.
The weights are detached and act as fixed coefficients of the
cross-entropy loss in the backward pass. Each GoldiMask intervention thus
uses two forward passes and one backward pass.

\section{Results and Analysis}
\label{sec:experiments}
\label{sec:results}
\newcommand{\gmResultSD}[2]{#1{\scriptsize\color{black!65}\,$\pm$\,#2}}
In this section, we evaluate our approach through the following research questions.
\begin{itemize}
\item[\textbf{RQ1.}] Does GoldiMask improve reasoning accuracy over existing
supervised fine-tuning methods? (Section~\ref{sec:main-results})
\item[\textbf{RQ2.}] Do these benefits extend to code generation?
(Section~\ref{sec:code-results})
\item[\textbf{RQ3.}] How does GoldiMask affect the accuracy--decoding-step
trade-off? (Section~\ref{sec:parallel-decoding})
\item[\textbf{RQ4.}] How do context selection, target weighting, and their
design choices affect performance? (Section~\ref{sec:component-results})
% \item[\textbf{RQ5.}] Can GoldiMask retain its benefits with a single training
% forward pass? (Section~\ref{sec:single-pass-results})
\end{itemize}

\subsection{Experimental Setup}
\label{sec:experimental-setup}

\paragraph{Models and training data.}
We use LLaDA-8B-Instruct~\citep{nie2025large-posttraining},
LLaDA-1.5~\citep{zhu2025llada15}, and Dream-7B~\citep{ye2025dream7bdiffusionlarge}.
We fine-tune all three on s1K~\citep{muennighoff2025s1},
which contains 1,000 chain-of-thought examples. We additionally fine-tune
LLaDA-8B-Instruct on LIFT-SFT-12K~\citep{parashar2026learnabilityinformed-lift}
as a second reasoning corpus. For code generation, we fine-tune
LLaDA-8B-Instruct and Dream-7B on KodCode~\citep{xu2025kodcode}
(Section~\ref{sec:code-results}).

\paragraph{Baselines.}
We compare against the base models, Vanilla SFT~\citep{sahoo2024simple},
and recent adaptive SFT methods including CART~\citep{ye2025dream7bdiffusionlarge},
GIFT~\citep{xu2026gift}, LIFT~\citep{parashar2026learnabilityinformed-lift},
and AGDO~\citep{deng2026beyond-agdo}. These adaptive methods modify masking
or loss weighting (Section~\ref{sec:related}).
All reported fine-tuned baselines are retrained in our harness with matched
hyperparameters; each results table lists the methods evaluated in that setting.

\paragraph{Training and evaluation.}
We follow the protocol of \citet{parashar2026learnabilityinformed-lift}, including
their LoRA and optimizer settings. On s1K, training uses a LoRA rank of $r=128$,
a global batch size of $8$, and $2{,}480$ steps. We evaluate pass@1 accuracy on
GSM8K, MATH-500, Countdown, and Sudoku across three seeds.
We use greedy decoding and a common checkpoint and generation-length selection
protocol across compared methods (Appendix~\ref{app:checkpoint-selection}).
Fixed-length results are reported in Appendix~\ref{app:fixed-task-lengths}.
On LLaDA-8B-Instruct, GoldiMask-D/R require approximately $1.2\times$ Vanilla
SFT's training time, comparable to LIFT (Appendix~\ref{sec:computeresources}).

\subsection{Reasoning Performance}
\label{sec:main-results}

We first compare GoldiMask's reasoning accuracy with existing supervised
fine-tuning methods across the tested backbones and training corpora.
On s1K, both GoldiMask-D and GoldiMask-R exceed
the strongest SFT baseline in average accuracy across the four tasks on every backbone
(Table~\ref{tab:main}). The strongest variant varies across models, but the
average improvement is shared by the two maximizers.

% Validation-selected results updated from author-approved rows on 2026-09-23.
% AGDO means and sample SDs recomputed from three seeds; see output/agdo_seed_update_20260923.
\begin{table}[!t]
\centering
\setlength{\tabcolsep}{5.5pt}
\hypersetup{hidelinks}
\caption{\textbf{Main results.} Fine-tuning on s1K across three diffusion language models. Values are means with standard deviations in smaller $\pm$ entries. Baselines are retrained in our harness with matched hyperparameters. $^{\ast}$LIFT is the stronger of its $H{=}2$ and $H{=}3$ variants per setting, by four-benchmark average. Relative changes are against the strongest baseline in each block. $\dagger$ denotes our code implementation; `w' denotes a workshop.}
\label{tab:main}
\small
\begin{tabular}{l>{\scriptsize\color{black!65}}l|cccc>{\columncolor{panelrow}}c}
\toprule
\textbf{Method} & \multicolumn{1}{l|}{\textbf{Venue}} & \textbf{GSM8K} & \textbf{MATH} & \textbf{Countdown} & \textbf{Sudoku} & \textbf{Average} \\
\midrule

\multicolumn{7}{c}{\textbf{LLaDA-8B-Instruct}}\\
\midrule
LLaDA-8B & -- & 78.1 & 36.1 & 19.6 & 11.2 & 36.3 \\
Vanilla {\scriptsize\color{black!65}\citep{sahoo2024simple}} & NeurIPS'24 & \gmResultSD{79.2}{0.2} & \gmResultSD{36.2}{0.9} & \gmResultSD{17.2}{2.1} & \gmResultSD{14.5}{1.3} & \gmResultSD{36.8}{0.3} \\
GIFT {\scriptsize\color{black!65}\citep{xu2026gift}} & ICLRw'26 & \gmResultSD{\textbf{81.0}}{0.1} & \gmResultSD{37.3}{0.4} & \gmResultSD{21.5}{2.8} & \gmResultSD{15.8}{0.4} & \gmResultSD{38.9}{0.5} \\
CART {\scriptsize\color{black!65}\citep{ye2025dream7bdiffusionlarge}} & arXiv'25 & \gmResultSD{80.4}{1.0} & \gmResultSD{35.9}{1.6} & \gmResultSD{24.0}{1.4} & \gmResultSD{14.4}{1.6} & \gmResultSD{38.7}{0.6} \\
AGDO$^\dagger$ {\scriptsize\color{black!65}\citep{deng2026beyond-agdo}} & ACL'26 & \gmResultSD{78.3}{0.8} & \gmResultSD{37.7}{0.1} & \gmResultSD{16.9}{1.3} & \gmResultSD{12.6}{0.7} & \gmResultSD{36.4}{0.3} \\
LIFT$^{\ast}$ {\scriptsize\color{black!65}\citep{parashar2026learnabilityinformed-lift}} & ICML'26 & \gmResultSD{79.1}{0.8} & \gmResultSD{36.6}{1.0} & \gmResultSD{28.8}{2.6} & \gmResultSD{13.5}{1.0} & \gmResultSD{39.5}{0.4} \\

\rowcolor{goldrow}
\textbf{GoldiMask-R} & Ours & \gmResultSD{80.4}{0.2} & \gmResultSD{38.3}{1.0} & \gmResultSD{\textbf{29.7}}{0.0} & \gmResultSD{14.4}{1.5} & \gmResultSD{40.7}{0.6} \\
\rowcolor{goldrow}
\multicolumn{2}{l}{{\scriptsize\color{gray}vs.\ LIFT}} & {\color{gaingreen}$\uparrow$\,\textbf{1.6\%}} & {\color{gaingreen}$\uparrow$\,\textbf{4.6\%}} & {\color{gaingreen}$\uparrow$\,\textbf{3.1\%}} & {\color{gaingreen}$\uparrow$\,\textbf{6.7\%}} & {\color{gaingreen}$\uparrow$\,\textbf{3.0\%}} \\

\rowcolor{goldrow}
\textbf{GoldiMask-D} & Ours & \gmResultSD{80.4}{1.2} & \gmResultSD{\textbf{39.5}}{0.8} & \gmResultSD{27.7}{2.7} & \gmResultSD{\textbf{16.9}}{0.5} & \gmResultSD{\textbf{41.1}}{0.7} \\
\rowcolor{goldrow}
\multicolumn{2}{l}{{\scriptsize\color{gray}vs.\ LIFT}} & {\color{gaingreen}$\uparrow$\,\textbf{1.6\%}} & {\color{gaingreen}$\uparrow$\,\textbf{7.9\%}} & {\color{lossred}$\downarrow$\,\textbf{3.8\%}} & {\color{gaingreen}$\uparrow$\,\textbf{25.2\%}} & {\color{gaingreen}$\uparrow$\,\textbf{4.1\%}} \\

\midrule
\multicolumn{7}{c}{\textbf{LLaDA-1.5}}\\
\midrule
LLaDA-1.5 & -- & 80.9 & 37.8 & 22.6 & 12.1 & 38.4 \\
Vanilla & NeurIPS'24 & \gmResultSD{79.2}{0.5} & \gmResultSD{32.6}{1.6} & \gmResultSD{22.0}{1.0} & \gmResultSD{14.4}{1.6} & \gmResultSD{37.1}{0.9} \\
GIFT & ICLRw'26 & \gmResultSD{80.4}{1.3} & \gmResultSD{35.9}{0.1} & \gmResultSD{19.7}{0.8} & \gmResultSD{19.0}{0.4} & \gmResultSD{38.8}{0.5} \\
CART & arXiv'25 & \gmResultSD{81.4}{0.8} & \gmResultSD{37.9}{1.3} & \gmResultSD{22.4}{2.2} & \gmResultSD{17.7}{0.9} & \gmResultSD{39.8}{0.1} \\
AGDO$^\dagger$ & ACL'26 & \gmResultSD{80.0}{0.5} & \gmResultSD{39.0}{0.6} & \gmResultSD{21.5}{2.2} & \gmResultSD{14.5}{0.6} & \gmResultSD{38.7}{0.4} \\
LIFT$^{\ast}$ & ICML'26 & \gmResultSD{80.4}{1.0} & \gmResultSD{37.7}{1.6} & \gmResultSD{25.7}{1.5} & \gmResultSD{16.1}{0.8} & \gmResultSD{40.0}{0.4} \\

\rowcolor{goldrow}
\textbf{GoldiMask-R} & Ours & \gmResultSD{81.7}{1.0} & \gmResultSD{\textbf{40.5}}{2.7} & \gmResultSD{\textbf{29.9}}{3.0} & \gmResultSD{17.9}{0.9} & \gmResultSD{\textbf{42.5}}{1.5} \\
\rowcolor{goldrow}
\multicolumn{2}{l}{{\scriptsize\color{gray}vs.\ LIFT}} & {\color{gaingreen}$\uparrow$\,\textbf{1.6\%}} & {\color{gaingreen}$\uparrow$\,\textbf{7.4\%}} & {\color{gaingreen}$\uparrow$\,\textbf{16.3\%}} & {\color{gaingreen}$\uparrow$\,\textbf{11.2\%}} & {\color{gaingreen}$\uparrow$\,\textbf{6.2\%}} \\

\rowcolor{goldrow}
\textbf{GoldiMask-D} & Ours & \gmResultSD{\textbf{81.9}}{0.8} & \gmResultSD{37.6}{0.2} & \gmResultSD{26.6}{1.6} & \gmResultSD{\textbf{21.6}}{1.8} & \gmResultSD{41.9}{0.8} \\
\rowcolor{goldrow}
\multicolumn{2}{l}{{\scriptsize\color{gray}vs.\ LIFT}} & {\color{gaingreen}$\uparrow$\,\textbf{1.9\%}} & {\color{lossred}$\downarrow$\,\textbf{0.3\%}} & {\color{gaingreen}$\uparrow$\,\textbf{3.5\%}} & {\color{gaingreen}$\uparrow$\,\textbf{34.2\%}} & {\color{gaingreen}$\uparrow$\,\textbf{4.7\%}} \\

\midrule
\multicolumn{7}{c}{\textbf{Dream-7B}}\\
\midrule
Dream-7B & -- & 76.7 & 39.8 & 21.1 & 8.2 & 36.5 \\
Vanilla & NeurIPS'24 & \gmResultSD{76.1}{0.4} & \gmResultSD{30.6}{0.3} & \gmResultSD{25.0}{0.9} & \gmResultSD{14.8}{2.1} & \gmResultSD{36.6}{0.5} \\
GIFT & ICLRw'26 & \gmResultSD{78.7}{0.7} & \gmResultSD{40.2}{0.8} & \gmResultSD{24.0}{1.4} & \gmResultSD{22.7}{1.4} & \gmResultSD{41.4}{0.0} \\
CART & arXiv'25 & \gmResultSD{77.8}{0.2} & \gmResultSD{38.9}{0.6} & \gmResultSD{22.3}{0.8} & \gmResultSD{17.2}{2.7} & \gmResultSD{39.1}{0.8} \\
AGDO$^\dagger$ & ACL'26 & \gmResultSD{77.6}{0.4} & \gmResultSD{40.3}{0.8} & \gmResultSD{29.6}{1.2} & \gmResultSD{10.8}{2.1} & \gmResultSD{39.6}{0.9} \\
LIFT$^{\ast}$$^\dagger$ & ICML'26 & \gmResultSD{78.4}{0.1} & \gmResultSD{42.1}{0.7} & \gmResultSD{28.7}{0.8} & \gmResultSD{23.3}{2.5} & \gmResultSD{43.1}{0.7} \\

\rowcolor{goldrow}
\textbf{GoldiMask-R} & Ours & \gmResultSD{\textbf{79.6}}{0.5} & \gmResultSD{\textbf{44.3}}{0.1} & \gmResultSD{33.1}{3.6} & \gmResultSD{\textbf{24.3}}{0.2} & \gmResultSD{\textbf{45.3}}{1.0} \\
\rowcolor{goldrow}
\multicolumn{2}{l}{{\scriptsize\color{gray}vs.\ LIFT}} & {\color{gaingreen}$\uparrow$\,\textbf{1.5\%}} & {\color{gaingreen}$\uparrow$\,\textbf{5.2\%}} & {\color{gaingreen}$\uparrow$\,\textbf{15.3\%}} & {\color{gaingreen}$\uparrow$\,\textbf{4.3\%}} & {\color{gaingreen}$\uparrow$\,\textbf{5.1\%}} \\

\rowcolor{goldrow}
\textbf{GoldiMask-D} & Ours & \gmResultSD{78.8}{0.6} & \gmResultSD{42.8}{0.4} & \gmResultSD{\textbf{37.0}}{4.0} & \gmResultSD{22.8}{2.1} & \gmResultSD{\textbf{45.3}}{1.6} \\
\rowcolor{goldrow}
\multicolumn{2}{l}{{\scriptsize\color{gray}vs.\ LIFT}} & {\color{gaingreen}$\uparrow$\,\textbf{0.5\%}} & {\color{gaingreen}$\uparrow$\,\textbf{1.7\%}} & {\color{gaingreen}$\uparrow$\,\textbf{28.9\%}} & {\color{lossred}$\downarrow$\,\textbf{2.1\%}} & {\color{gaingreen}$\uparrow$\,\textbf{5.1\%}} \\

\bottomrule
\end{tabular}
\end{table}

\begin{table}[!htb]
\centering
\setlength{\tabcolsep}{3pt}
\caption{\textbf{A second corpus.} LLaDA-8B-Instruct fine-tuned on LIFT-SFT-12K at an equal step budget. Conventions and markers follow Table~\ref{tab:main}.}
\label{tab:12k}
% AGDO means and sample SDs recomputed from three seeds.
\small
\begin{tabular}{l|cccc>{\columncolor{panelrow}}c}
\toprule
\textbf{Name} & \textbf{GSM8K} & \textbf{MATH} & \textbf{Countdown} & \textbf{Sudoku} & \textbf{Average} \\
\midrule
LLaDA-8B & 78.1 & 36.1 & 19.6 & 11.2 & 36.3 \\
Vanilla & \gmResultSD{\textbf{82.9}}{1.0} & \gmResultSD{34.6}{0.4} & \gmResultSD{19.9}{0.3} & \gmResultSD{9.3}{0.7} & \gmResultSD{36.7}{0.3} \\
GIFT & \gmResultSD{79.8}{0.3} & \gmResultSD{36.1}{0.1} & \gmResultSD{23.6}{4.7} & \gmResultSD{8.8}{1.2} & \gmResultSD{37.1}{1.0} \\
CART & \gmResultSD{80.2}{0.9} & \gmResultSD{36.4}{2.8} & \gmResultSD{22.4}{3.5} & \gmResultSD{15.6}{1.2} & \gmResultSD{38.6}{1.8} \\
LIFT$^{\ast}$ & \gmResultSD{81.3}{0.7} & \gmResultSD{36.6}{0.5} & \gmResultSD{23.6}{1.5} & \gmResultSD{11.3}{0.3} & \gmResultSD{38.2}{0.4} \\
AGDO$^\dagger$ & \gmResultSD{82.3}{0.5} & \gmResultSD{34.8}{0.4} & \gmResultSD{19.4}{1.7} & \gmResultSD{10.3}{0.9} & \gmResultSD{36.7}{0.3} \\
\rowcolor{goldrow}
\textbf{GoldiMask-R} & \gmResultSD{82.0}{0.6} & \gmResultSD{36.5}{0.7} & \gmResultSD{23.1}{1.9} & \gmResultSD{14.8}{0.1} & \gmResultSD{39.1}{0.9} \\
\rowcolor{goldrow}
{\scriptsize\color{gray}vs.\ CART} & {\color{gaingreen}$\uparrow$\,\textbf{2.2\%}} & {\color{gaingreen}$\uparrow$\,\textbf{0.3\%}} & {\color{gaingreen}$\uparrow$\,\textbf{3.1\%}} & {\color{lossred}$\downarrow$\,\textbf{5.1\%}} & {\color{gaingreen}$\uparrow$\,\textbf{1.3\%}} \\

\rowcolor{goldrow}
\textbf{GoldiMask-D} & \gmResultSD{82.2}{0.4} & \gmResultSD{\textbf{40.0}}{2.0} & \gmResultSD{\textbf{29.1}}{4.7} & \gmResultSD{\textbf{16.5}}{0.2} & \gmResultSD{\textbf{41.9}}{0.5} \\
\rowcolor{goldrow}
{\scriptsize\color{gray}vs.\ CART} & {\color{gaingreen}$\uparrow$\,\textbf{2.5\%}} & {\color{gaingreen}$\uparrow$\,\textbf{9.9\%}} & {\color{gaingreen}$\uparrow$\,\textbf{29.9\%}} & {\color{gaingreen}$\uparrow$\,\textbf{5.8\%}} & {\color{gaingreen}$\uparrow$\,\textbf{8.5\%}} \\

\bottomrule
\end{tabular}
\end{table}

The additional LLaDA-8B-Instruct experiment on LIFT-SFT-12K shows the same
pattern: both variants exceed the strongest baseline in average accuracy,
with GoldiMask-D achieving the highest average
(Table~\ref{tab:12k}). These results support the approach across the tested
backbones and training corpora, while individual benchmark outcomes vary.
Additional evaluation results are provided in Appendix~\ref{app:additional-results}.

\subsection{Extension to Code Generation}
\label{sec:code-results}

We next ask whether the benefits observed on reasoning tasks extend to code
generation. We report results for LLaDA-8B-Instruct and Dream-7B fine-tuned on
KodCode~\citep{xu2025kodcode} and evaluate pass@1 on
HumanEval~\citep{chen2021humaneval} and the sanitized MBPP test
split~\citep{austin2021mbpp}, using a maximum generation length of 256 tokens.

GoldiMask achieves the highest average accuracy across HumanEval and MBPP
among the compared fine-tuned methods, with GoldiMask-D leading on LLaDA-8B
and GoldiMask-R on Dream-7B (Table~\ref{tab:code}).

% Validation-selected results updated from author-approved rows on 2026-09-23.
% Retained rows and reused AGDO SDs are recorded in output/validation_migration_20260923.
\begin{table}[!htb]
\centering
\setlength{\tabcolsep}{10.4pt}
\caption{\textbf{Code generation.} HE denotes HumanEval; MBPP uses the sanitized test split.
Relative changes use each backbone's strongest fine-tuned baseline;
other conventions follow Table~\ref{tab:main}.}
\label{tab:code}
\small
\begin{tabular}{l|cc>{\columncolor{panelrow}}c|cc>{\columncolor{panelrow}}c}
\toprule
& \multicolumn{3}{c|}{\textbf{LLaDA-8B-Instruct}} & \multicolumn{3}{c}{\textbf{Dream-7B}} \\
\cmidrule(lr){2-4}\cmidrule(lr){5-7}
\textbf{Name} & \textbf{HE} & \textbf{MBPP} & \textbf{Avg.} & \textbf{HE} & \textbf{MBPP} & \textbf{Avg.} \\
\midrule
Base model & 34.1 & 41.3 & 37.7 & 39.6 & 52.5 & 46.1 \\
Vanilla SFT & \gmResultSD{30.8}{1.3} & \gmResultSD{41.6}{1.7} & \gmResultSD{36.2}{0.2} & \gmResultSD{45.4}{5.6} & \gmResultSD{42.4}{1.7} & \gmResultSD{43.9}{3.6} \\
GIFT & \gmResultSD{35.4}{1.7} & \gmResultSD{40.5}{2.8} & \gmResultSD{38.0}{0.5} & \gmResultSD{44.5}{1.7} & \gmResultSD{46.5}{0.8} & \gmResultSD{45.5}{0.4} \\
CART & \gmResultSD{33.5}{1.7} & \gmResultSD{39.9}{1.4} & \gmResultSD{36.7}{0.2} & \gmResultSD{47.5}{4.7} & \gmResultSD{50.4}{0.3} & \gmResultSD{49.0}{2.2} \\
AGDO$^\dagger$ & \gmResultSD{32.9}{0.9} & \gmResultSD{40.1}{2.2} & \gmResultSD{36.5}{0.7} & \gmResultSD{41.8}{3.0} & \gmResultSD{\textbf{53.7}}{5.5} & \gmResultSD{47.7}{4.3} \\
LIFT & \gmResultSD{33.5}{0.0} & \gmResultSD{39.5}{0.8} & \gmResultSD{36.5}{0.4} & \gmResultSD{42.4}{1.3} & \gmResultSD{49.4}{1.7} & \gmResultSD{45.9}{1.5} \\
\midrule
\rowcolor{goldrow}
\textbf{GoldiMask-R} & \gmResultSD{34.4}{0.4} & \gmResultSD{41.9}{4.4} & \gmResultSD{38.2}{2.4} & \gmResultSD{\textbf{49.1}}{3.0} & \gmResultSD{51.8}{0.6} & \gmResultSD{\textbf{50.4}}{1.2} \\
\rowcolor{goldrow}
{\scriptsize\color{gray}vs.\ best baseline} & {\color{lossred}\scriptsize $\downarrow$\,\textbf{2.8\%}} & {\color{gaingreen}\scriptsize $\uparrow$\,\textbf{3.5\%}} & {\color{gaingreen}\scriptsize $\uparrow$\,\textbf{0.5\%}} & {\color{gaingreen}\scriptsize $\uparrow$\,\textbf{3.4\%}} & {\color{gaingreen}\scriptsize $\uparrow$\,\textbf{2.8\%}} & {\color{gaingreen}\scriptsize $\uparrow$\,\textbf{2.9\%}} \\
\rowcolor{goldrow}
\textbf{GoldiMask-D} & \gmResultSD{\textbf{39.0}}{1.7} & \gmResultSD{\textbf{44.6}}{3.6} & \gmResultSD{\textbf{41.8}}{2.7} & \gmResultSD{45.4}{5.6} & \gmResultSD{49.6}{1.4} & \gmResultSD{47.5}{2.1} \\
\rowcolor{goldrow}
{\scriptsize\color{gray}vs.\ best baseline} & {\color{gaingreen}\scriptsize $\uparrow$\,\textbf{10.2\%}} & {\color{gaingreen}\scriptsize $\uparrow$\,\textbf{10.1\%}} & {\color{gaingreen}\scriptsize $\uparrow$\,\textbf{10.0\%}} & {\color{lossred}\scriptsize $\downarrow$\,\textbf{4.4\%}} & {\color{lossred}\scriptsize $\downarrow$\,\textbf{1.6\%}} & {\color{lossred}\scriptsize $\downarrow$\,\textbf{3.1\%}} \\
\bottomrule
\end{tabular}
\end{table}

\subsection{Accuracy and Parallel-Decoding Cost}
\label{sec:parallel-effects}
\label{sec:parallel-decoding}
% Retain the former subsubsection labels for existing cross-references.
\label{sec:parallel-theory}
\label{sec:parallel-results}
We compare GoldiMask-D/R/P with the adaptive SFT baselines using
dParallel's confidence-threshold decoder~\citep{chen2026dparallel} across
three backbones. Appendix~\ref{app:pd-setup} gives the setup.

\begin{figure}[!htb]
\centering
\includegraphics[width=\linewidth]{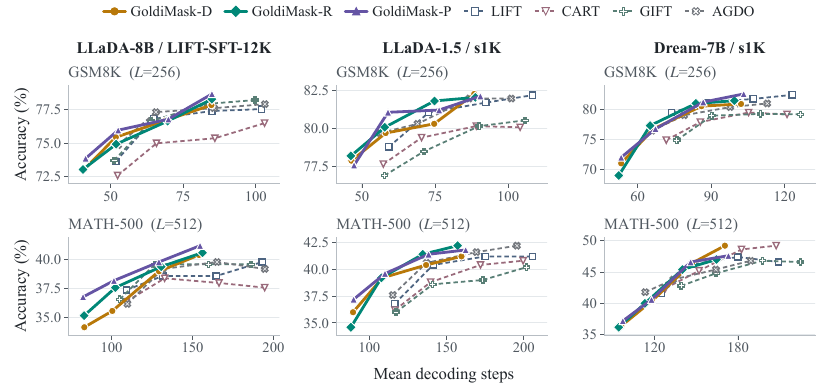}
\caption{\textbf{Accuracy versus parallel-decoding steps.}
GoldiMask-D/R/P (solid) and SFT baselines (dashed) on GSM8K and MATH-500.
Markers show $\tau=0.70,0.80,0.90,0.95$ from left to right; lines connect
measured settings. Upper-left is better. $L$ denotes generation length.}
\label{fig:pd-main}
\end{figure}

Figure~\ref{fig:pd-main} shows that GoldiMask consistently requires fewer
decoding steps on GSM8K and MATH-500 across the tested backbones and
thresholds, with comparable accuracy at higher thresholds. This pattern
holds for three reveal-set maximizers, showing that the decoding benefit
extends across GoldiMask variants. At lower thresholds, the reduction in
steps comes with a larger accuracy trade-off, particularly on Dream-7B.
Additional results on Countdown and Sudoku, with a discussion of
their constraint-based structure and its implications for parallel decoding,
appear in Appendix~\ref{app:pd-results}.

\begin{gmtakeaway}{1}
Improving parallel decoding also involves how the model is trained. Our
results suggest that adapting context and supervision during SFT can
reduce decoding iterations at comparable accuracy, even when the
decoding rule is unchanged.
\end{gmtakeaway}

\subsection{Ablation Study}
\label{sec:component-results}

We examine the contributions of selection and weighting and their design
choices on LLaDA-8B-Instruct fine-tuned on s1K.

\paragraph{Contributions of selection and weighting.}
We compare GoldiMask-D with random
reveals, static initial-marginal selection, and uniform target weights.
Static selection ranks candidates once by
$\Delta(j\mid\varnothing)=F(\{j\})-F(\varnothing)$ and reveals the top $B$
without updating their scores. Iterative greedy instead recomputes marginal
gains after each reveal, accounting for the changing context and remaining
targets. All reveal-based variants use the same reveal budget and two-pass
procedure; water-filling is retained except in the uniform-weight variant.
Vanilla SFT provides a reference.

\begin{table}[!htbp]
\centering\small
\setlength{\tabcolsep}{4.5pt}
\caption{\textbf{Contributions of selection and weighting.} Accuracy on
LLaDA-8B-Instruct fine-tuned on s1K. All reveal-based configurations use two passes.
Bold marks the best result in each column.}
\label{tab:results-components}
\begin{tabular}{@{}llcccc>{\columncolor{panelrow}}c@{}}
\toprule
\textbf{Reveal selection} & \textbf{Target weighting} & \textbf{GSM8K} & \textbf{MATH} & \textbf{Countdown} & \textbf{Sudoku} & \textbf{Avg.}\\
\midrule
None (vanilla SFT) & Uniform & 79.2 & 36.2 & 17.2 & 14.5 & 36.8 \\
Random & Water-filling & 80.1 & 37.5 & 24.6 & 14.5 & 39.2\\
Static initial-marginal & Water-filling & 80.2 & 37.8 & 24.6 & 14.9 & 39.4\\
GoldiMask-D & Uniform & 79.4 & 38.1 & 26.8 & 15.3 & 39.9\\
\rowcolor{goldrow}
GoldiMask-D & Water-filling & \textbf{80.4} & \textbf{39.5} & \textbf{27.7} & \textbf{16.9} & \textbf{41.1} \\
\bottomrule
\end{tabular}
\end{table}

GoldiMask-D achieves the highest average, 41.1
(Table~\ref{tab:results-components}), compared with 36.8 for Vanilla SFT.
With water-filling held fixed, random reveals reach 39.2 and static selection
reaches 39.4. Recomputing marginal gains therefore improves the average by
1.7 points over static ranking, supporting the benefit of adapting each
reveal to earlier choices. With iterative selection held fixed, uniform
weighting reaches 39.9, versus 41.1 with water-filling. Together, these
comparisons support both adaptive selection and target weighting in this setting.

\paragraph{Choice of supervision priority.}
We compare $\lambda(p)=p(1-p)^3$ with $1-p$, $p(1-p)$, and $p(1-p)^2$
(Table~\ref{tab:curves}). The three functions with an interior peak reach
averages of 40.5--41.1, compared with 38.6 for $1-p$, which gives the
highest priority to the least-confident targets. This comparison supports
prioritizing targets at intermediate confidence. Our choice has the highest
reported average, although the differences among the interior-peaked
functions are small. Appendix~\ref{app:gm-functions} provides the theoretical
and empirical motivation for the priority and contextual-support functions.

% Per-benchmark columns generated by scripts/make_ablation_tables.py in the
% GoldiMask code repo, from results_summary/all_eval_metrics.csv. Provenance:
%   lambda propto 1-p (pessimism, peak as p to 0): results/defaults_lam_1mp-best
%   lambda propto p(1-p) (symmetric, peak at 0.5): results/defaults_lam_p1p-best
%   lambda propto p(1-p)^2: corrected scores supplied by the author on 2026-09-22; four-task mean 40.65, displayed as 40.7.
%   measured shape, round constants (ours): author-approved validation-selected GoldiMask-D, LLaDA-8B+s1K (2026-09-23)
\begin{table}[!htbp]
\centering\small
\setlength{\tabcolsep}{4.5pt}
\caption{\textbf{Supervision-priority ablation.} Accuracy on LLaDA-8B-Instruct
fine-tuned on s1K, averaged over three seeds. Avg. is the mean across the four
benchmarks. Bold marks the highest average.}
\label{tab:curves}
\begin{tabular}{@{}lcccc>{\columncolor{panelrow}}c@{}}
\toprule
\textbf{Priority} & \textbf{GSM8K} & \textbf{MATH} & \textbf{Countdown} & \textbf{Sudoku} & \textbf{Avg.} \\
\midrule
$\lambda \propto 1-p$ (peak at $0$) & 79.2 & 38.6 & 20.7 & 15.9 & 38.6 \\
$\lambda \propto p(1-p)$ (peak at $1/2$) & 80.9 & 38.0 & 27.0 & 16.1 & 40.5 \\
$\lambda \propto p(1-p)^2$ (peak at $1/3$) & 81.0 & 38.4 & 26.3 & 16.9 & 40.7 \\
% fitted lookup tables for $\lambda$ and $\phi$             & -- & -- & -- & -- & 40.5 \\
\rowcolor{goldrow}
$\lambda \propto p(1-p)^3$ (ours, peak at $1/4$) & 80.4 & 39.5 & 27.7 & 16.9 & \textbf{41.1} \\
\bottomrule
\end{tabular}
\end{table}

\paragraph{Choice of loss weighting.}
Table~\ref{tab:weighting} compares uniform weighting, DRO-tilted hardness,
and utility-based weighting with caps. \emph{Uniform weighting} assigns
equal weight to every supervised token.

\emph{DRO-tilted hardness} gives more weight to difficult training examples.
Each example's loss is multiplied by $\exp(d/\tau)$, where $d$ is its mean
token difficulty and $\tau=0.25$. A running-average normalizer keeps the
multipliers approximately centered at one. This emphasis on difficult
examples is inspired by distributionally robust optimization (DRO).
% Author-supplied example-level DRO definition. The precise token-difficulty
% function and running-average update still need documentation from the run.

\emph{Uncapped utility weighting} normalizes the measured token utilities to
mean one without imposing a cap. For positive total utility, it assigns
$w_i=n u_i/\sum_{j\in\mathcal S}u_j$. This preserves the total weight $n$
while allowing a token with high utility to receive a large share of it.

\emph{Water-filling} distributes a fixed total weight in proportion to
measured token utility, subject to a per-token cap. It redistributes capped
mass among targets below the cap. We compare the default $c=10$ with $c=5$
and $c=15$.

Water-filling averages 41.1 with $c=10$, 40.9 with $c=5$, and 40.6 with
$c=15$. Uncapped utility weighting, DRO-tilted hardness, and uniform
weighting reach 39.4, 40.6, and 39.9, respectively.
Appendix~\ref{app:gm-weighting} explains the allocation properties of
water-filling and the role of the cap.

% Per-benchmark columns generated by scripts/make_ablation_tables.py in the
% GoldiMask code repo, from results_summary/all_eval_metrics.csv. Provenance:
%   uniform (no weighting): same run as tab:results-components' GoldiMask-D+uniform row; kept in sync
%   DRO-tilted hardness: results/v2_normattn_dro-best
%   water-filling (ours): author-approved validation-selected GoldiMask-D, LLaDA-8B+s1K (2026-09-23)
%   water-filling c=5: author-supplied means; Sudoku confirmed as 17.1 (2026-09-24), average 40.9.
%   water-filling c=15: author-corrected MATH 38.7 (2026-09-25), mean 40.625 displayed as 40.6.
%   uncapped utility weighting: author-corrected MATH 37.3 and Sudoku 14.7 (2026-09-24), mean 39.375 displayed as 39.4.
\begin{table}[!htbp]
\centering\small
\setlength{\tabcolsep}{4.5pt}
\caption{\textbf{Loss-weighting comparison.} Accuracy on LLaDA-8B-Instruct
fine-tuned on s1K, averaged over three seeds. Bold marks the highest average.}
\label{tab:weighting}
\begin{tabular}{@{}lcccc>{\columncolor{panelrow}}c@{}}
\toprule
\textbf{Weighting} & \textbf{GSM8K} & \textbf{MATH} & \textbf{Countdown} & \textbf{Sudoku} & \textbf{Avg.} \\
\midrule
Uniform & 79.4 & 38.1 & 26.8 & 15.3 & 39.9 \\
DRO-tilted hardness & 80.6 & 39.4 & 24.6 & 17.9 & 40.6 \\
Uncapped utility weighting & 78.4 & 37.3 & 27.1 & 14.7 & 39.4 \\
Water-filling ($c=5$) & 80.1 & 38.9 & 27.5 & 17.1 & 40.9 \\
Water-filling ($c=15$) & 80.1 & 38.7 & 27.9 & 15.8 & 40.6 \\
\rowcolor{goldrow}
Water-filling ($c=10$, ours) & 80.4 & 39.5 & 27.7 & 16.9 & \textbf{41.1} \\
\bottomrule
\end{tabular}
\end{table}
\FloatBarrier

\begin{gmtakeaway}{2}
Favoring targets at intermediate confidence matters more than precisely
locating the priority peak. Similarly, limiting
token weights matters more than the exact cap, with similar average
accuracy for $c=5,10,15$. The priority curve and cap can therefore be chosen
flexibly within the tested range.
\end{gmtakeaway}

\section{Conclusion}
\label{sec:conclusion}
In this work, we introduced GoldiMask, an approach to supervised fine-tuning
of discrete diffusion language models that jointly considers which tokens
provide context and how strongly the remaining targets are supervised. By
combining submodular context selection with response-based loss weighting,
GoldiMask improves average benchmark accuracy across the evaluated backbones
and training corpora. Its benefits also extend to parallel decoding, often
yielding comparable accuracy with fewer decoding iterations.
% We also developed a single-pass variant that retains part of these gains with little
% additional training cost, while the full approach achieves higher average accuracy.
Together, these results highlight context selection and supervision
weighting as complementary tools for improving diffusion SFT.

\section{Limitations and Future Work}
\label{sec:limitations}

GoldiMask demonstrates the benefits of coordinating context selection and
supervision weighting, and provides a starting point for improving both.
Our theoretical analysis and empirical studies support the chosen functions,
though they do not establish their optimality. The full method trades
additional training cost for higher average accuracy.
% The single-pass GoldiMask-A variant retains part of the gains with little overhead.

A promising next step is to develop more direct estimates of how much each
token benefits from supervision and how this depends on its context. Such
estimates could guide both target selection and loss weighting. The
parallel-decoding gains also motivate studying training and decoding
together: beyond the confidence-threshold procedure evaluated here, future
work could explore how supervision strategies support different ways of
generating multiple tokens at once.

\FloatBarrier
% Conference-specific statements are omitted from the preprint edition.
\clearpage
\bibliographystyle{arxiv}
\bibliography{arxiv}

\clearpage
\appendix
\section{Extended related work}
\label{app:related}

See Section~\ref{sec:related} for the directly comparable methods. \paragraph{Trajectory- and order-based approaches.} TABOM \citep{chen2026tabom} post-trains on the model's own decoding trajectories with an entropy-ranking loss. It starts, independently, from the
same diagnosis as our reveal design: random masking during training does not match the easy-to-hard order of inference. The remedies differ. TABOM changes what the model trains on, while we change how
the response is corrupted. Where-to-Unmask \citep{asano2026unmask} learns an inference-time unmasking planner over a frozen token model, complementary to training-time corruption design. \paragraph{Curriculum and learnability.} Curriculum learning \citep{bengio2009curriculum}, self-paced learning \citep{kumar2010self}, and learning-progress curricula \citep{graves2017automated} formalize the intuition that training is most effective at intermediate difficulty \citep{kidd2012goldilocks,wilson2019eighty}. Our function examines how a target's response to additional context varies with its initial confidence, providing empirical motivation for confidence-dependent context allocation. RHO-LOSS \citep{mindermann2022prioritized} and Rho-1 \citep{lin2024rho} select training points that are learnable and not yet learnt, at example and token granularity via reference models; our estimand is measured by within-token contrasts instead.

\paragraph{Submodular selection.} Submodular functions model diminishing returns, where the marginal gain of adding an element decreases as the selected set grows. In our setting, revealing a token provides context to the remaining prediction targets, with diminishing benefit as their contextual support increases. Revealing that token also removes it from supervision. We capture this interaction through a submodular objective that sums concave functions of accumulated attention support over the remaining targets, weighted by their supervision priority. Because each reveal also removes a target's contribution, the objective is generally non-monotone. More broadly, submodular objectives are widely used in machine learning to select representative and non-redundant subsets, including data summarization~\cite{chen2024discretely,mualem2024submodular}, active learning~\cite{chen2013near}, feature selection~\cite{tukan2024practical}, model compression~\cite{el2022data}, and sensor placement~\citep{zhou2022risk,tukan2023orbslam3}. For monotone submodular maximization under a cardinality constraint, greedy optimization admits the classical $(1-1/e)$ approximation guarantee~\citep{nemhauser1978analysis}.

\paragraph{Parallel decoding.} 
Autoregressive language models restrict each prediction to the left context during both training and inference, whereas the bidirectional attention of discrete diffusion language models offers a wider range of possibilities. Precisely because many tokens can be decoded at the same time, there is the promise of making inference faster. Methods that enhance faster parallel decoding include certainty-forcing distillation as in dParallel \citep{chen2026dparallel}, and training-free rules based on confidence (Fast-dLLM and DAWN), entropy (EB-Sampler), temporal stability (KLASS), or locality (LocalLeap) \citep{wu2026fastdllm, kim2026klass, ben2026accelerated-ebsampler, kong2025accelerating-localleap}. DAWN \citep{luo2026dawn} combines confidence with attention-based dependencies, revealing tokens supported by previously revealed high-confidence anchors while avoiding coupled uncertain tokens. AXON \citep{ayoub2026axon} instead uses confidence to identify reliable anchors and attention to select those most useful to the remaining uncertain positions. Diffusion and autoregressive models can also be combined through speculative decoding \citep{leviathan2023fast}: DFlash \citep{chen2026dflash} uses diffusion models to obtain many tokens in parallel before an autoregressive model verifies them.

\section{Main Algorithm}
\label{app:main-algorithm}

Algorithm~\ref{alg:goldimask} describes the exact-budget GoldiMask-D/R variants.
We sample $\rho\sim\mathcal{U}(0.1,1-t)$ for $t<0.9$ and
$\rho\sim\mathcal{U}(0,1-t)$ otherwise, applying the over-mask-then-reveal
procedure in both regimes. If $|\mathcal C|\leq K$, we reveal no tokens and
apply uniformly weighted supervision to all masked candidates, retaining
the loss normalization $1/(tL)$.

% \begin{algorithm}[t]
% \caption{GoldiMask, one supervised fine-tuning step. The two measured curves
% $\lambda$ and $\phi$ enter as inputs and are fixed before training.}
% \label{alg:goldimask}
% \begin{algorithmic}[1]
% \Require response $x$ of length $L$; model $f_\theta$; learnability curve
% $\lambda(\cdot)$; support curve $\phi(\cdot)$; weight cap $c$
% Historical pseudocode below predates the piecewise sampling rule.
% \State $M \gets$ random subset of $\{1,\dots,L\}$ with $|M| = \lfloor tL \rfloor$
% \State $B \gets \lfloor \rho L \rfloor$
% \State $p^{1}, A \gets f_\theta(x \setminus M)$
% \State $R \gets \textsc{Select}(F,\, M,\, B)$
%   \Comment{any maximizer of submodular $F$ s.t.\ $|R| = B$; App.~\ref{sec:selectors}}
% \State $S \gets M \setminus R$
% \State $p^{2}, \ell \gets f_\theta(x \setminus S)$
% \For{$i \in S$}
%   \State $u_i \gets \max\!\big(\log p^{2}_i - \log p^{1}_i,\; 0\big)\cdot \lambda(p^{2}_i)$
% \EndFor
% \State find $\nu$ by bisection so that $\sum_{i \in S} \min(u_i/\nu,\, c) = |S|$
% \State $w_i \gets \min(u_i/\nu,\, c)$ \quad for $i \in S$
% \State \Return $\dfrac{1}{t}\sum_{i \in S} w_i \, \ell_i$
% \Statex \hskip-\algorithmicindent
%   \textbf{where}\quad $F(R) \;=\; \displaystyle\sum_{i \,\in\, M \setminus R}
%   \lambda\big(p^{1}_i\big)\;\phi\big(S_i(R)\big)$,
%   \qquad $S_i(R) = \displaystyle\sum_{j \in R} A_{ij}$
% \end{algorithmic}
% \end{algorithm}
\begin{algorithm}[htb!]
\caption{GoldiMask, one supervised fine-tuning step. }
\label{alg:goldimask}
\begin{algorithmic}[1]
\Require response $x$ of length $L$; model $f_\theta$;
learnability curve $\lambda(\cdot)$; support curve $\phi(\cdot)$; weight cap $c>1$
\State $t \sim \mathcal{U}(0,1)$
\If{$t<0.9$}
  \State $\rho \sim \mathcal{U}(0.1,\,1-t)$
\Else
  \State $\rho \sim \mathcal{U}(0,\,1-t)$
\EndIf
\State $\mathcal{C} \gets$ positions masked by the forward process at rate $t+\rho$
\State $K \gets \lfloor tL \rfloor$, \quad $B \gets |\mathcal{C}| - K$
\If{$|\mathcal C|\leq K$}
  \State $p \gets f_\theta(x\setminus\mathcal C)$ \Comment{no reveal; uniform weights}
  \State \Return $-\dfrac{1}{tL}\sum_{i\in\mathcal C}\log p_i$
\EndIf
\State $p^{(1)}, A \gets f_\theta(x \setminus \mathcal{C})$
\State $R \gets \textsc{Select}(F,\, \mathcal{C},\, B)$
  \Comment{any maximizer of submodular $F$ s.t. $|R| = B$; App.~\ref{sec:selectors}}
\State $\mathcal{S} \gets \mathcal{C} \setminus R$
\State $p^{(2)} \gets f_\theta(x \setminus \mathcal{S})$
\For{$i \in \mathcal{S}$}
  \State $u_i \gets \big[\log p^{(2)}_i - \log p^{(1)}_i\big]_{+}\cdot \lambda\big(p^{(2)}_i\big)$
\EndFor
\State $P \gets \{i\in\mathcal S : u_i>0\}$, \quad $n_+\gets|P|$ \Comment{water-filling}
\If{$n_+=0$}
  \State $w_i\gets1$ for all $i\in\mathcal S$
\ElsIf{$c n_+\leq K$}
  \State $w_i\gets c$ for $i\in P$
  \State $w_i\gets (K-c n_+)/(K-n_+)$ for $i\in\mathcal S\setminus P$
\Else
  \State find $\nu>0$ by bisection so that $\sum_{i\in P}\min(u_i/\nu,\,c)=K$
  \State $w_i\gets\min(u_i/\nu,\,c)$ for all $i\in\mathcal S$
\EndIf
\State Detach the weights $w_i$ from the computation graph.
\State \Return $-\dfrac{1}{tL}\sum_{i \in \mathcal{S}} w_i \log p^{(2)}_i$
\Statex \hskip-\algorithmicindent
  \textbf{where}\quad $F(R) = \displaystyle\sum_{i \in \mathcal{C} \setminus R}
  \lambda\big(p^{(1)}_i\big)\,\phi\big(S_i(R)\big)$,
  \qquad $S_i(R) = \displaystyle\sum_{j \in R} A_{ij}$
\end{algorithmic}
\end{algorithm}
\paragraph{Maximizing the reveal objective.}

The selection step in Algorithm~\ref{alg:goldimask} constructs $R$ greedily on the marginals
of $F$. Because $F$ is submodular but
non-monotone, the guarantee usually quoted for greedy selection does not apply
here. The selector implementations and their budget conventions are
given in Appendix~\ref{app:arms}.

\section{Reveal-Set Selection}
\label{app:reveal-selection}
This section explains how to optimize the reveal objective and explores how the choice of the selection algorithm affects the corresponding downstream performance. Section~\ref{app:arms} introduces the four evaluated algorithms and their budget conventions. Section \ref{app:gm-selection-properties} then develops the mathematical basis for selection. It proves that the objective is submodular, explains why it is generally non-monotone, and derives the exact marginal gain used to score each candidate reveal.
% This section brings together the choice of maximizer, its empirical
% comparison, and the structural properties of the reveal objective. A reveal
% supplies context to other targets while removing its own supervised
% contribution. The resulting objective has diminishing returns but need not
% increase with every reveal. Proposition~\ref{prop:submodular} formalizes this
% property, and Equation~\ref{eq:gm-reveal-marginal} gives the marginal used by
% the selector. We first describe and compare the maximizers, then give the
% formal derivation; guarantees for a particular algorithm require its own
% constraint assumptions.

% appendix/arms.tex — the four submodular maximizers used for the reveal objective.
% Created 2026-09-12 (Claude session). Included by appendix/reveal_selection.tex.
%
% New bib entries needed (buchbinder2014submodular and nemhauser1978analysis already exist):
%
% @inproceedings{han2020twingreedy,
%   title={Deterministic Approximation for Submodular Maximization over a Matroid in Nearly Linear Time},
%   author={Han, Kai and Cao, Zongmai and Cui, Shuang and Wu, Benwei},
%   booktitle={Advances in Neural Information Processing Systems},
%   volume={33},
%   year={2020}
% }
% @inproceedings{tukan2024practical,
%   title={Practical $0.385$-Approximation for Submodular Maximization Subject to a Cardinality Constraint},
%   author={Tukan, Murad and Mualem, Loay and Feldman, Moran},
%   booktitle={Advances in Neural Information Processing Systems},
%   volume={37},
%   pages={51223--51253},
%   year={2024}
% }

\subsection{Maximizers for the reveal objective}
\label{sec:selectors}
\label{app:arms}

The reveal objective $F(R)=\sum_{i\in\mathcal C\setminus R}\lambda(p_i)\,\phi(S_i(R))$ is
submodular but not monotone (Proposition~\ref{prop:submodular}): revealing a token adds support to
the targets that attend to it and, at the same time, removes that token's own term from the sum, so
$F(\varnothing)=F(\mathcal C)=0$ and the value can fall as the reveal set grows.
We run GoldiMask with four maximizers with different determinism and budget characteristics. Table~\ref{tab:arms} summarizes the four methods.
Table~\ref{tab:arms-full} compares their performance across all reasoning settings against the strongest baseline in each setting.

\definecolor{armhead}{RGB}{52,52,52}
\definecolor{armband}{RGB}{247,247,247}

\begin{table}[!b]
\centering
\small
\setlength{\tabcolsep}{4pt}
\renewcommand{\arraystretch}{1.25}
\caption{\textbf{The four maximizers used for reveal-set selection.}
The selection procedures below describe
our implementations.}
\label{tab:arms}
\resizebox{\ifdim\width>\linewidth\linewidth\else\width\fi}{!}{%
\begin{tabular}{@{}l c l@{}}
\toprule
\rowcolor{armhead}
\textcolor{white}{\textbf{Maximizer}} &
\textcolor{white}{\textbf{Deterministic}} &
\textcolor{white}{\textbf{Reference}} \\
\midrule
\textbf{Greedy} (D) & yes &
\citet{nemhauser1978analysis}, Math.\ Programming \\
\rowcolor{armband}
\textbf{Randomized greedy} (R) & no &
\citet{buchbinder2014submodular}, SODA \\
\rowcolor{goldrow}
\textbf{TwinGreedy} (T) & yes &
\citet{han2020deterministic}, NeurIPS \\
\rowcolor{armband}
\textbf{Practical $0.385$} (P) & no &
\citet{tukan2024practical}, NeurIPS \\
\bottomrule
\end{tabular}}
\end{table}

\begin{samepage}
\paragraph{Selection procedures.}
The four maximizers construct reveal sets as follows.
\begin{itemize}
\item \textbf{Greedy} adds the candidate with the largest marginal gain at
each step (Equation~\ref{eq:gm-reveal-marginal}), recomputing gains as the
reveal set grows.
\item \textbf{Randomized greedy} samples uniformly from the $B$ highest-marginal
remaining candidates, updating the pool after each selection.
\item \textbf{TwinGreedy} grows two disjoint candidate sets. At each step,
it chooses the feasible token--set pair with the largest marginal gain
and adds that token to that set. It returns the set with the higher
objective value.
\item \textbf{Practical $0.385$} (P) retains the original algorithm's
three-stage structure. We construct a local-search guide $Z$, run guided
stochastic greedy while excluding $Z$ during the first $\lceil0.372B\rceil$
iterations, and return whichever solution has the higher objective value.
Our implementation uses exact best-swap sweeps to construct the guide and
samples uniformly from the $B$ highest-marginal candidates in each greedy
iteration, skipping negative-gain additions.
\end{itemize}
All four use the same marginal-gain oracle, priority and support functions.
GoldiMask-D/R use an exact reveal budget, while GoldiMask-T/P may terminate
early and retain additional supervised targets. All variants use mean-one
water-filling over the remaining targets.
\end{samepage}

\paragraph{All arms on all settings.}
Table~\ref{tab:arms-full} places the four maximizers side by side on the four settings of
Tables~\ref{tab:main} and~\ref{tab:12k}, against the strongest baseline of each setting.

\begin{table}[htb!]
\setlength{\tabcolsep}{4.5pt}
\caption{\textbf{All maximizers on all settings.} Same benchmarks, protocol and conventions as
Tables~\ref{tab:main} and~\ref{tab:12k}. The baseline row is the strongest baseline of each setting
by four-benchmark average; GoldiMask-D, GoldiMask-R and baseline rows are those of the main text.
GoldiMask-T and -P are two-seed means using validation-selected checkpoints. The relative change beneath each
GoldiMask row is measured against that baseline. Bold marks the best entry per column within
a setting.}
\label{tab:arms-full}
\begin{center}
\small
\begin{tabular}{l|cccc>{\columncolor{panelrow}}c}
\toprule
\textbf{Name} & \textbf{GSM8K} & \textbf{MATH} & \textbf{Countdown} & \textbf{Sudoku} & \textbf{Average} \\
\midrule
\multicolumn{6}{c}{\textbf{LLaDA-8B-Instruct / s1K}}\\
\midrule
LIFT$^{\ast}$ & 79.1 & 36.6 & 28.8 & 13.5 & 39.5 \\
\rowcolor{goldrow}
\textbf{GoldiMask-D} & 80.4 & \textbf{39.5} & 27.7 & 16.9 & \textbf{41.1} \\
\rowcolor{goldrow}
{\scriptsize\color{gray}vs.\ LIFT} & {\color{gaingreen}$\uparrow$\,\textbf{1.6\%}} & {\color{gaingreen}$\uparrow$\,\textbf{7.9\%}} & {\color{lossred}$\downarrow$\,\textbf{3.8\%}} & {\color{gaingreen}$\uparrow$\,\textbf{25.2\%}} & {\color{gaingreen}$\uparrow$\,\textbf{4.1\%}} \\

\rowcolor{goldrow}
\textbf{GoldiMask-R} & 80.4 & 38.3 & \textbf{29.7} & 14.4 & 40.7 \\
\rowcolor{goldrow}
{\scriptsize\color{gray}vs.\ LIFT} & {\color{gaingreen}$\uparrow$\,\textbf{1.6\%}} & {\color{gaingreen}$\uparrow$\,\textbf{4.6\%}} & {\color{gaingreen}$\uparrow$\,\textbf{3.1\%}} & {\color{gaingreen}$\uparrow$\,\textbf{6.7\%}} & {\color{gaingreen}$\uparrow$\,\textbf{3.0\%}} \\

\rowcolor{goldrow}
\textbf{GoldiMask-T} & \textbf{81.5} & 38.7 & 26.8 & 15.1 & 40.5 \\
\rowcolor{goldrow}
{\scriptsize\color{gray}vs.\ LIFT} & {\color{gaingreen}$\uparrow$\,\textbf{3.0\%}} & {\color{gaingreen}$\uparrow$\,\textbf{5.7\%}} & {\color{lossred}$\downarrow$\,\textbf{6.9\%}} & {\color{gaingreen}$\uparrow$\,\textbf{11.9\%}} & {\color{gaingreen}$\uparrow$\,\textbf{2.5\%}} \\

\rowcolor{goldrow}
\textbf{GoldiMask-P} & 81.0 & 36.5 & 28.8 & \textbf{17.4} & 40.9 \\
\rowcolor{goldrow}
{\scriptsize\color{gray}vs.\ LIFT} & {\color{gaingreen}$\uparrow$\,\textbf{2.4\%}} & {\color{lossred}$\downarrow$\,\textbf{0.3\%}} & {\color{gray}\textbf{0.0\%}} & {\color{gaingreen}$\uparrow$\,\textbf{28.9\%}} & {\color{gaingreen}$\uparrow$\,\textbf{3.5\%}} \\

\midrule
\multicolumn{6}{c}{\textbf{LLaDA-1.5 / s1K}}\\
\midrule
LIFT$^{\ast}$ & 80.4 & 37.7 & 25.7 & 16.1 & 40.0 \\
\rowcolor{goldrow}
\textbf{GoldiMask-D} & 81.9 & 37.6 & 26.6 & \textbf{21.6} & 41.9 \\
\rowcolor{goldrow}
{\scriptsize\color{gray}vs.\ LIFT} & {\color{gaingreen}$\uparrow$\,\textbf{1.9\%}} & {\color{lossred}$\downarrow$\,\textbf{0.3\%}} & {\color{gaingreen}$\uparrow$\,\textbf{3.5\%}} & {\color{gaingreen}$\uparrow$\,\textbf{34.2\%}} & {\color{gaingreen}$\uparrow$\,\textbf{4.7\%}} \\

\rowcolor{goldrow}
\textbf{GoldiMask-R} & 81.7 & 40.5 & 29.9 & 17.9 & \textbf{42.5} \\
\rowcolor{goldrow}
{\scriptsize\color{gray}vs.\ LIFT} & {\color{gaingreen}$\uparrow$\,\textbf{1.6\%}} & {\color{gaingreen}$\uparrow$\,\textbf{7.4\%}} & {\color{gaingreen}$\uparrow$\,\textbf{16.3\%}} & {\color{gaingreen}$\uparrow$\,\textbf{11.2\%}} & {\color{gaingreen}$\uparrow$\,\textbf{6.2\%}} \\

\rowcolor{goldrow}
\textbf{GoldiMask-T} & \textbf{82.3} & 40.5 & 26.6 & 17.6 & 41.7 \\
\rowcolor{goldrow}
{\scriptsize\color{gray}vs.\ LIFT} & {\color{gaingreen}$\uparrow$\,\textbf{2.4\%}} & {\color{gaingreen}$\uparrow$\,\textbf{7.4\%}} & {\color{gaingreen}$\uparrow$\,\textbf{3.5\%}} & {\color{gaingreen}$\uparrow$\,\textbf{9.3\%}} & {\color{gaingreen}$\uparrow$\,\textbf{4.2\%}} \\

\rowcolor{goldrow}
\textbf{GoldiMask-P} & 81.7 & \textbf{40.9} & \textbf{30.3} & 16.7 & 42.4 \\
\rowcolor{goldrow}
{\scriptsize\color{gray}vs.\ LIFT} & {\color{gaingreen}$\uparrow$\,\textbf{1.6\%}} & {\color{gaingreen}$\uparrow$\,\textbf{8.5\%}} & {\color{gaingreen}$\uparrow$\,\textbf{17.9\%}} & {\color{gaingreen}$\uparrow$\,\textbf{3.7\%}} & {\color{gaingreen}$\uparrow$\,\textbf{6.0\%}} \\

\midrule
\multicolumn{6}{c}{\textbf{Dream-7B / s1K}}\\
\midrule
LIFT$^{\ast}$ & 78.4 & 42.1 & 28.7 & 23.3 & 43.1 \\
\rowcolor{goldrow}
\textbf{GoldiMask-D} & 78.8 & 42.8 & 37.0 & 22.8 & 45.3 \\
\rowcolor{goldrow}
{\scriptsize\color{gray}vs.\ LIFT} & {\color{gaingreen}$\uparrow$\,\textbf{0.5\%}} & {\color{gaingreen}$\uparrow$\,\textbf{1.7\%}} & {\color{gaingreen}$\uparrow$\,\textbf{28.9\%}} & {\color{lossred}$\downarrow$\,\textbf{2.1\%}} & {\color{gaingreen}$\uparrow$\,\textbf{5.1\%}} \\

\rowcolor{goldrow}
\textbf{GoldiMask-R} & \textbf{79.6} & \textbf{44.3} & 33.1 & \textbf{24.3} & 45.3 \\
\rowcolor{goldrow}
{\scriptsize\color{gray}vs.\ LIFT} & {\color{gaingreen}$\uparrow$\,\textbf{1.5\%}} & {\color{gaingreen}$\uparrow$\,\textbf{5.2\%}} & {\color{gaingreen}$\uparrow$\,\textbf{15.3\%}} & {\color{gaingreen}$\uparrow$\,\textbf{4.3\%}} & {\color{gaingreen}$\uparrow$\,\textbf{5.1\%}} \\

\rowcolor{goldrow}
\textbf{GoldiMask-T} & 78.7 & 43.7 & \textbf{41.8} & 24.2 & \textbf{47.1} \\
\rowcolor{goldrow}
{\scriptsize\color{gray}vs.\ LIFT} & {\color{gaingreen}$\uparrow$\,\textbf{0.4\%}} & {\color{gaingreen}$\uparrow$\,\textbf{3.8\%}} & {\color{gaingreen}$\uparrow$\,\textbf{45.6\%}} & {\color{gaingreen}$\uparrow$\,\textbf{3.9\%}} & {\color{gaingreen}$\uparrow$\,\textbf{9.3\%}} \\

\rowcolor{goldrow}
\textbf{GoldiMask-P} & 78.7 & 44.0 & 40.8 & 21.1 & 46.2 \\
\rowcolor{goldrow}
{\scriptsize\color{gray}vs.\ LIFT} & {\color{gaingreen}$\uparrow$\,\textbf{0.4\%}} & {\color{gaingreen}$\uparrow$\,\textbf{4.5\%}} & {\color{gaingreen}$\uparrow$\,\textbf{42.2\%}} & {\color{lossred}$\downarrow$\,\textbf{9.4\%}} & {\color{gaingreen}$\uparrow$\,\textbf{7.2\%}} \\

\midrule
\multicolumn{6}{c}{\textbf{LLaDA-8B-Instruct / LIFT-SFT-12K}}\\
\midrule
CART & 80.2 & 36.4 & 22.4 & 15.6 & 38.6 \\
\rowcolor{goldrow}
\textbf{GoldiMask-D} & \textbf{82.2} & \textbf{40.0} & 29.1 & \textbf{16.5} & \textbf{41.9} \\
\rowcolor{goldrow}
{\scriptsize\color{gray}vs.\ CART} & {\color{gaingreen}$\uparrow$\,\textbf{2.5\%}} & {\color{gaingreen}$\uparrow$\,\textbf{9.9\%}} & {\color{gaingreen}$\uparrow$\,\textbf{29.9\%}} & {\color{gaingreen}$\uparrow$\,\textbf{5.8\%}} & {\color{gaingreen}$\uparrow$\,\textbf{8.5\%}} \\

\rowcolor{goldrow}
\textbf{GoldiMask-R} & 82.0 & 36.5 & 23.1 & 14.8 & 39.1 \\
\rowcolor{goldrow}
{\scriptsize\color{gray}vs.\ CART} & {\color{gaingreen}$\uparrow$\,\textbf{2.2\%}} & {\color{gaingreen}$\uparrow$\,\textbf{0.3\%}} & {\color{gaingreen}$\uparrow$\,\textbf{3.1\%}} & {\color{lossred}$\downarrow$\,\textbf{5.1\%}} & {\color{gaingreen}$\uparrow$\,\textbf{1.3\%}} \\

\rowcolor{goldrow}
\textbf{GoldiMask-T} & 81.7 & 39.0 & \textbf{30.1} & 14.0 & 41.2 \\
\rowcolor{goldrow}
{\scriptsize\color{gray}vs.\ CART} & {\color{gaingreen}$\uparrow$\,\textbf{1.9\%}} & {\color{gaingreen}$\uparrow$\,\textbf{7.1\%}} & {\color{gaingreen}$\uparrow$\,\textbf{34.4\%}} & {\color{lossred}$\downarrow$\,\textbf{10.3\%}} & {\color{gaingreen}$\uparrow$\,\textbf{6.7\%}} \\

\rowcolor{goldrow}
\textbf{GoldiMask-P} & 81.0 & 37.1 & 29.1 & 14.1 & 40.3 \\
\rowcolor{goldrow}
{\scriptsize\color{gray}vs.\ CART} & {\color{gaingreen}$\uparrow$\,\textbf{1.0\%}} & {\color{gaingreen}$\uparrow$\,\textbf{1.9\%}} & {\color{gaingreen}$\uparrow$\,\textbf{29.9\%}} & {\color{lossred}$\downarrow$\,\textbf{9.6\%}} & {\color{gaingreen}$\uparrow$\,\textbf{4.4\%}} \\

\bottomrule
\end{tabular}
\end{center}
\end{table}

\subsection{Properties of the reveal objective}
\label{app:gm-selection-properties}

Let $C$ be a finite set of candidate tokens. For every pair $i,j\in C$,
let $A_{ij}\geq 0$ denote the attention support that revealing token $j$
delivers to token $i$. We assume that $A$ has zero diagonal and that the
attention matrix is fixed throughout the construction of the reveal set.
Likewise, let $\lambda_i\geq 0$ be the fixed learnability weight of token $i$.

For a reveal set $R\subseteq C$, define the total support delivered to token
$i$ by
\[
S_i(R)
\;=\;
\sum_{j\in R}A_{ij}.
\]
The GoldiMask objective is
\[
F(R)
\;=\;
\sum_{i\in C\setminus R}
\lambda_i\,\phi\!\bigl(S_i(R)\bigr),
\qquad R\subseteq C.
\]
The complement $C\setminus R$ appears because the tokens in $R$ are revealed
and therefore are not supervised. Hence, revealing a token can help the
remaining supervised tokens while simultaneously removing its own contribution
from the objective.

A function $F:2^C\to\mathbb{R}$ is \emph{submodular} if it satisfies the
diminishing-returns property. For $R\subseteq C$ and $x\in C\setminus R$,
define the marginal change from adding $x$ to $R$ as
\[
\Delta(x\mid R)
\;:=\;
F(R\cup\{x\})-F(R).
\]
Then $F$ is submodular if
\[
\Delta(x\mid R)
\;\geq\;
\Delta(x\mid R')
\qquad
\text{for every }
R\subseteq R'\subseteq C
\text{ and }
x\notin R'.
\]
This means that once more elements have already been selected, adding $x$ cannot
become more valuable.

\begin{proposition}[Submodularity]
\label{prop:submodular}
Suppose that $\phi:[0,\infty)\to\mathbb{R}$ is concave and nondecreasing.
Then
\[
F(R)
=
\sum_{i\in C\setminus R}
\lambda_i\,\phi\!\left(\sum_{j\in R}A_{ij}\right)
\]
is submodular on $2^C$. If, in addition, $\phi(0)=0$, then
$F(\emptyset)=0$ and $F(R)\geq 0$ for every $R\subseteq C$.
\end{proposition}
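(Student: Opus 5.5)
The plan is to write $F$ as a sum of per-token terms, $F(R)=\sum_{i\in C} f_i(R)$, where
\[
f_i(R)=\mathbf{1}[i\notin R]\,\lambda_i\,\phi\bigl(S_i(R)\bigr).
\]
Since nonnegative combinations of submodular functions are submodular, it suffices to prove that each $f_i$ is submodular. I would verify diminishing returns directly, taking $R\subseteq R'\subseteq C$ and $x\notin R'$ and comparing $\Delta_i(x\mid R)$ with $\Delta_i(x\mid R')$.

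I split into two cases according to whether $x=i$.

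\emph{Case $x\neq i$.} Adding $x$ increases $S_i$ by $a:=A_{ix}\geq 0$. If $i\in R'$ but $i\notin R$, the marginal at $R'$ is $0$. The marginal at $R$ is
\[
\lambda_i\bigl(\phi(S_i(R)+a)-\phi(S_i(R))\bigr)\geq 0,
\]
because $\phi$ is nondecreasing, so the inequality holds. If $i\in R$, both marginals vanish. If $i\notin R'$, both marginals have the form $\lambda_i(\phi(s+a)-\phi(s))$, with $s=S_i(R)\leq s'=S_i(R')$ since $A\geq 0$. Concavity of $\phi$ gives that increments over an interval of fixed length $a$ are nonincreasing in the left endpoint. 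Hence $\phi(s+a)-\phi(s)\geq\phi(s'+a)-\phi(s')$, and multiplying by $\lambda_i\geq 0$ finishes this case.

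\emph{Case $x=i$.} Here $i\notin R'$, so $i\notin R$. Adding $i$ removes its own term, and $S_i$ does not change because $A_{ii}=0$, although this is not even needed. The marginals are therefore $-\lambda_i\phi(S_i(R))$ and $-\lambda_i\phi(S_i(R'))$. Since $S_i(R)\leq S_i(R')$ and $\phi$ is nondecreasing, $-\lambda_i\phi(S_i(R))\geq-\lambda_i\phi(S_i(R'))$, which is the required direction. This is the step where monotonicity of $\phi$, and not just concavity, is genuinely needed: removing a target is more costly once that target has accumulated more support. It is also the source of non-monotonicity of $F$.

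Summing over $i$ yields submodularity of $F$. For the second claim, assume $\phi(0)=0$. Then $S_i(\emptyset)=0$, so $F(\emptyset)=\sum_i\lambda_i\phi(0)=0$. Moreover, every $S_i(R)\geq 0$, so $\phi(S_i(R))\geq\phi(0)=0$ by monotonicity; with $\lambda_i\geq 0$, every summand is nonnegative and $F(R)\geq 0$.

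I do not expect a real obstacle. The only delicate point is the bookkeeping of the case where $i$ is already in $R'$ but not in $R$, where one must use $\phi(s+a)\geq\phi(s)$ rather than concavity. The concavity step itself can be justified by the standard slope inequality for concave functions, applied to the points $s\leq s'$ with shift $a$. If $s+a>s'$ and the intervals overlap, the inequality still follows by comparing $\phi(s+a)-\phi(s)$ with $\phi(s'+a)-\phi(s')$ via the three-chord lemma, or equivalently by noting that $h(u)=\phi(u+a)-\phi(u)$ is nonincreasing for concave $\phi$.
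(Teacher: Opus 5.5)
Your proof is correct and is essentially the paper's argument: your per-token cases ($i\notin R'\cup\{x\}$ via concavity, $i\in R'\setminus R$ via monotonicity, and $i=x$ via monotonicity of the forfeited term) are exactly the three nonnegative terms the paper obtains by splitting $\Delta(x\mid R)-\Delta(x\mid R')$ over $I=C\setminus(R'\cup\{x\})$, $D=R'\setminus R$, and $x$ itself. The only difference is bookkeeping: you check submodularity of each summand $f_i$ and then sum, whereas the paper first writes out the full marginal of $F$ and compares it term by term.
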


\begin{proof}
We prove the diminishing-returns inequality
\[
\Delta(x\mid R)
\;\geq\;
\Delta(x\mid R')
\]
for arbitrary sets $R\subseteq R'\subseteq C$ and an arbitrary
$x\notin R'$.

\medskip
\noindent
\emph{Step 1: Diminishing increments of the concave response function.}

For any fixed $a\geq 0$, define
\[
d_a(s)
\;:=\;
\phi(s+a)-\phi(s),
\qquad s\geq 0.
\]
The quantity $d_a(s)$ is the benefit of adding $a$ units of support when the
current support level is $s$.

Because $\phi$ is concave, its increments decrease as the starting point
increases. Therefore,
\[
s\leq t
\qquad\Longrightarrow\qquad
d_a(s)\geq d_a(t).
\]
Equivalently,
\[
\phi(s+a)-\phi(s)
\;\geq\;
\phi(t+a)-\phi(t)
\qquad
\text{whenever }s\leq t.
\]
Because $\phi$ is also nondecreasing and $a\geq 0$, we have
\[
d_a(s)
=
\phi(s+a)-\phi(s)
\geq 0.
\]
Thus, for every fixed support increment $a\geq 0$, the function $d_a$ is
nonnegative and nonincreasing.

\medskip
\noindent
\emph{Step 2: Derivation of the exact marginal change.}

Fix a set $R\subseteq C$ and an element $x\notin R$. For notational
convenience, write
\[
R_x
\;:=\;
R\cup\{x\}.
\]
For every token $i$, adding $x$ to the reveal set changes its support according
to
\[
\begin{aligned}
S_i(R_x)
&=
\sum_{j\in R_x}A_{ij}
&=
\sum_{j\in R}A_{ij}+A_{ix}
&=
S_i(R)+A_{ix}.
\end{aligned}
\]

By definition,
\[
F(R_x)
=
\sum_{i\in C\setminus R_x}
\lambda_i\,\phi\!\bigl(S_i(R_x)\bigr).
\]
Since $x\notin R$, the objective before revealing $x$ can be decomposed as
\[
F(R)
=
\lambda_x\phi\!\bigl(S_x(R)\bigr)
+
\sum_{i\in C\setminus R_x}
\lambda_i\,\phi\!\bigl(S_i(R)\bigr).
\]
The first term is the contribution of $x$ while it is still supervised. The
remaining sum contains the tokens that stay supervised both before and after
$x$ is revealed.

Therefore,
\[
\begin{aligned}
\Delta(x\mid R)
&=
F(R_x)-F(R) \\
&=
\sum_{i\in C\setminus R_x}
\lambda_i\,\phi\!\bigl(S_i(R_x)\bigr)
-
\left[
\lambda_x\phi\!\bigl(S_x(R)\bigr)
+
\sum_{i\in C\setminus R_x}
\lambda_i\,\phi\!\bigl(S_i(R)\bigr)
\right] \\
&=
\sum_{i\in C\setminus R_x}
\lambda_i
\left[
\phi\!\bigl(S_i(R_x)\bigr)
-
\phi\!\bigl(S_i(R)\bigr)
\right]
-
\lambda_x\phi\!\bigl(S_x(R)\bigr).
\end{aligned}
\]
Using
\[
S_i(R_x)=S_i(R)+A_{ix},
\]
we obtain
\[
\begin{aligned}
\Delta(x\mid R)
&=
\sum_{i\in C\setminus(R\cup\{x\})}
\lambda_i
\left[
\phi\!\bigl(S_i(R)+A_{ix}\bigr)
-
\phi\!\bigl(S_i(R)\bigr)
\right]
-
\lambda_x\phi\!\bigl(S_x(R)\bigr) \\
&=
\sum_{i\in C\setminus(R\cup\{x\})}
\lambda_i\,
d_{A_{ix}}\!\bigl(S_i(R)\bigr)
-
\lambda_x\phi\!\bigl(S_x(R)\bigr).
\end{aligned}
\]
Hence the marginal has two components:
\begin{equation}
\boxed{
\Delta(x\mid R)
=
\underbrace{
\sum_{i\in C\setminus(R\cup\{x\})}
\lambda_i\,
d_{A_{ix}}\!\bigl(S_i(R)\bigr)
}_{\text{support delivered to the remaining clients}}
-
\underbrace{
\lambda_x\phi\!\bigl(S_x(R)\bigr)
}_{\text{value forfeited by revealing }x}
}.
\label{eq:gm-reveal-marginal}
\end{equation}

\medskip
\noindent
\emph{Step 3: Comparison of the two marginals.}

Now fix
\[
R\subseteq R'\subseteq C,
\qquad
x\notin R'.
\]
Define
\[
I
:=
C\setminus(R'\cup\{x\})
\]
to be the tokens that remain supervised under both reveal sets, and define
\[
D
:=
R'\setminus R
\]
to be the tokens that are supervised under $R$ but have already been revealed
under $R'$.

Because $x\notin R'$, the client set under the smaller reveal set decomposes as
the disjoint union
\[
C\setminus(R\cup\{x\})
=
I\mathbin{\dot\cup}D.
\]
Using the marginal formula derived above, we have
\[
\begin{aligned}
\Delta(x\mid R)
={}&
\sum_{i\in I}
\lambda_i d_{A_{ix}}\!\bigl(S_i(R)\bigr)
+
\sum_{i\in D}
\lambda_i d_{A_{ix}}\!\bigl(S_i(R)\bigr)
-
\lambda_x\phi\!\bigl(S_x(R)\bigr),
\end{aligned}
\]
whereas
\[
\Delta(x\mid R')
=
\sum_{i\in I}
\lambda_i d_{A_{ix}}\!\bigl(S_i(R')\bigr)
-
\lambda_x\phi\!\bigl(S_x(R')\bigr).
\]
Subtracting the second expression from the first gives
\[
\begin{aligned}
\Delta(x\mid R)-\Delta(x\mid R')
={}&
\sum_{i\in I}
\lambda_i
\left[
d_{A_{ix}}\!\bigl(S_i(R)\bigr)
-
d_{A_{ix}}\!\bigl(S_i(R')\bigr)
\right] \\
&+
\sum_{i\in D}
\lambda_i d_{A_{ix}}\!\bigl(S_i(R)\bigr) \\
&+
\lambda_x
\left[
\phi\!\bigl(S_x(R')\bigr)
-
\phi\!\bigl(S_x(R)\bigr)
\right].
\end{aligned}
\]
Equivalently, restoring the definitions of $I$ and $D$,
\[
\begin{aligned}
\Delta(x\mid R)-\Delta(x\mid R')
={}&
\sum_{i\in C\setminus(R'\cup\{x\})}
\lambda_i
\left[
d_{A_{ix}}\!\bigl(S_i(R)\bigr)
-
d_{A_{ix}}\!\bigl(S_i(R')\bigr)
\right] \\
&+
\sum_{i\in R'\setminus R}
\lambda_i d_{A_{ix}}\!\bigl(S_i(R)\bigr) \\
&+
\lambda_x
\left[
\phi\!\bigl(S_x(R')\bigr)
-
\phi\!\bigl(S_x(R)\bigr)
\right].
\end{aligned}
\]

We now show that each of these three terms is nonnegative.

First, since $R\subseteq R'$ and $A_{ij}\geq 0$,
\[
\begin{aligned}
S_i(R')
&=
\sum_{j\in R'}A_{ij}
&=
\sum_{j\in R}A_{ij}
+
\sum_{j\in R'\setminus R}A_{ij}
&\geq
\sum_{j\in R}A_{ij}
=
S_i(R).
\end{aligned}
\]
Since $d_{A_{ix}}$ is nonincreasing,
\[
d_{A_{ix}}\!\bigl(S_i(R)\bigr)
\geq
d_{A_{ix}}\!\bigl(S_i(R')\bigr).
\]
Together with $\lambda_i\geq 0$, this proves that every summand in the first
line is nonnegative:
\[
\lambda_i
\left[
d_{A_{ix}}\!\bigl(S_i(R)\bigr)
-
d_{A_{ix}}\!\bigl(S_i(R')\bigr)
\right]
\geq 0.
\]

Second, monotonicity of $\phi$ and $A_{ix}\geq 0$ imply
\[
d_{A_{ix}}\!\bigl(S_i(R)\bigr)
=
\phi\!\bigl(S_i(R)+A_{ix}\bigr)
-
\phi\!\bigl(S_i(R)\bigr)
\geq 0.
\]
Therefore,
\[
\sum_{i\in R'\setminus R}
\lambda_i d_{A_{ix}}\!\bigl(S_i(R)\bigr)
\geq 0.
\]
This term accounts for clients that can benefit from revealing $x$ under the
smaller set $R$ but no longer contribute under $R'$ because they have already
been revealed.

Third, the same support-ordering argument gives
\[
S_x(R')
\geq
S_x(R).
\]
Since $\phi$ is nondecreasing,
\[
\phi\!\bigl(S_x(R')\bigr)
-
\phi\!\bigl(S_x(R)\bigr)
\geq 0.
\]
Because $\lambda_x\geq 0$,
\[
\lambda_x
\left[
\phi\!\bigl(S_x(R')\bigr)
-
\phi\!\bigl(S_x(R)\bigr)
\right]
\geq 0.
\]
This expresses the fact that revealing $x$ forfeits at least as much client
value under the larger reveal set, where $x$ has already accumulated more
support.

All three terms in
\[
\Delta(x\mid R)-\Delta(x\mid R')
\]
are therefore nonnegative. Consequently,
\[
\Delta(x\mid R)-\Delta(x\mid R') \geq 0 \quad\Longrightarrow\quad \Delta(x\mid R)\geq\Delta(x\mid R').
\]
Therefore $F$ is
submodular.

Finally, assuming $\phi(0)= 0$, and since $\phi$ is nondecreasing,
\[
\phi(s)\geq\phi(0)=0
\qquad\text{for every }s\geq0,
\]
and therefore $F(R)\geq0$ for every $R\subseteq C$.
\end{proof}
For our support function, $F$ is non-monotone whenever there exist
$i\neq j$ with $\lambda_i A_{ij}>0$. Since $\phi(s)>0$ for $s>0$,
\[
F(\{j\})\geq \lambda_i\phi(A_{ij})>0,
\]
whereas $\phi(0)=0$ gives $F(\emptyset)=0$, and $F(C)=0$ because no supervised
clients remain. Thus $F$ increases above zero at an intermediate reveal set and
must decrease again before the full set is revealed.

Equation~\ref{eq:gm-reveal-marginal} gives the gain used by the selector.
It also makes the cost of revealing a target explicit.

\section{Additional Results}
\label{app:additional-results}

\subsection{Fixed task-specific generation lengths}
\label{app:fixed-task-lengths}
Table~\ref{tab:fixed-task-lengths}
reports accuracy with generation lengths of 512 tokens for GSM8K and MATH-500
and 128 tokens for Countdown and Sudoku, shared across methods. Checkpoints
are selected by validation loss. These comparisons complement the
best-over-length results in Tables~\ref{tab:main} and~\ref{tab:12k}.

\begin{table}[!htbp]
\centering\small
\setlength{\tabcolsep}{6pt}
\renewcommand{\arraystretch}{0.96}
\caption{\textbf{Fixed task-specific generation lengths.} Accuracy (\%) at GSM8K/MATH-500/Countdown/Sudoku lengths of 512/512/128/128. Bold marks the highest displayed mean in each column within a panel.}
\label{tab:fixed-task-lengths}
\begin{tabular}{lcccc>{\columncolor{panelrow}}c}
\toprule
\multicolumn{6}{c}{\textbf{LLaDA-8B / s1K}}\\
\midrule
Method & GSM8K & MATH & Countdown & Sudoku & Average \\
\midrule
GIFT & \gmResultSD{81.0}{0.1} & \gmResultSD{37.3}{0.4} & \gmResultSD{21.5}{2.8} & \gmResultSD{15.8}{0.4} & \gmResultSD{38.9}{0.5} \\
CART & \gmResultSD{80.4}{1.0} & \gmResultSD{35.6}{2.0} & \gmResultSD{23.8}{1.7} & \gmResultSD{14.4}{1.6} & \gmResultSD{38.6}{0.6} \\
AGDO & \gmResultSD{78.3}{0.8} & \gmResultSD{37.2}{0.7} & \gmResultSD{15.5}{1.3} & \gmResultSD{12.4}{0.6} & \gmResultSD{35.8}{0.5} \\
LIFT & \gmResultSD{79.1}{0.8} & \gmResultSD{35.9}{1.0} & \gmResultSD{24.9}{3.0} & \gmResultSD{13.5}{1.0} & \gmResultSD{38.4}{0.7} \\
\midrule
\rowcolor{goldrow}
GoldiMask-D & \gmResultSD{80.4}{1.2} & \gmResultSD{\textbf{39.5}}{0.8} & \gmResultSD{21.9}{2.0} & \gmResultSD{16.3}{1.0} & \gmResultSD{\textbf{39.5}}{0.8} \\
\rowcolor{goldrow}
GoldiMask-R & \gmResultSD{80.4}{0.2} & \gmResultSD{38.3}{1.0} & \gmResultSD{\textbf{25.4}}{0.0} & \gmResultSD{13.4}{1.5} & \gmResultSD{39.4}{0.6} \\
\rowcolor{goldrow}
GoldiMask-T & \gmResultSD{\textbf{81.5}}{0.4} & \gmResultSD{38.7}{1.3} & \gmResultSD{21.3}{3.0} & \gmResultSD{15.1}{0.1} & \gmResultSD{39.2}{0.3} \\
\rowcolor{goldrow}
GoldiMask-P & \gmResultSD{81.0}{1.3} & \gmResultSD{36.5}{0.7} & \gmResultSD{22.5}{2.5} & \gmResultSD{\textbf{17.4}}{1.3} & \gmResultSD{39.4}{0.8} \\
\midrule
\multicolumn{6}{c}{\textbf{LLaDA-1.5 / s1K}}\\
\midrule
GIFT & \gmResultSD{80.4}{1.3} & \gmResultSD{35.9}{0.1} & \gmResultSD{17.4}{3.0} & \gmResultSD{19.0}{0.4} & \gmResultSD{38.2}{1.0} \\
CART & \gmResultSD{81.1}{1.1} & \gmResultSD{37.6}{1.8} & \gmResultSD{20.0}{2.3} & \gmResultSD{15.6}{0.6} & \gmResultSD{38.6}{0.3} \\
AGDO & \gmResultSD{79.8}{0.8} & \gmResultSD{38.8}{0.5} & \gmResultSD{20.7}{1.2} & \gmResultSD{14.3}{0.5} & \gmResultSD{38.4}{0.4} \\
LIFT & \gmResultSD{80.3}{1.1} & \gmResultSD{37.5}{1.9} & \gmResultSD{21.0}{3.7} & \gmResultSD{16.1}{0.8} & \gmResultSD{38.7}{1.1} \\
\midrule
\rowcolor{goldrow}
GoldiMask-D & \gmResultSD{81.8}{0.8} & \gmResultSD{37.6}{0.2} & \gmResultSD{20.0}{4.4} & \gmResultSD{\textbf{21.6}}{1.8} & \gmResultSD{40.3}{0.8} \\
\rowcolor{goldrow}
GoldiMask-R & \gmResultSD{81.3}{1.0} & \gmResultSD{40.5}{2.7} & \gmResultSD{22.9}{3.0} & \gmResultSD{17.9}{0.9} & \gmResultSD{40.7}{1.5} \\
\rowcolor{goldrow}
GoldiMask-T & \gmResultSD{\textbf{82.3}}{1.4} & \gmResultSD{40.5}{1.0} & \gmResultSD{\textbf{23.6}}{1.9} & \gmResultSD{17.6}{0.7} & \gmResultSD{\textbf{41.0}}{0.4} \\
\rowcolor{goldrow}
GoldiMask-P & \gmResultSD{81.7}{0.6} & \gmResultSD{\textbf{40.9}}{1.6} & \gmResultSD{22.1}{4.7} & \gmResultSD{16.7}{0.7} & \gmResultSD{40.4}{1.5} \\
\midrule
\multicolumn{6}{c}{\textbf{Dream-7B / s1K}}\\
\midrule
GIFT & \gmResultSD{77.3}{0.9} & \gmResultSD{40.2}{0.8} & \gmResultSD{23.8}{1.1} & \gmResultSD{22.7}{1.4} & \gmResultSD{41.0}{0.2} \\
CART & \gmResultSD{77.8}{0.2} & \gmResultSD{38.9}{0.6} & \gmResultSD{22.3}{0.8} & \gmResultSD{17.2}{2.7} & \gmResultSD{39.1}{0.8} \\
AGDO & \gmResultSD{77.6}{0.4} & \gmResultSD{40.3}{0.8} & \gmResultSD{29.1}{1.7} & \gmResultSD{10.8}{2.1} & \gmResultSD{39.4}{1.0} \\
LIFT & \gmResultSD{78.4}{0.1} & \gmResultSD{42.1}{0.7} & \gmResultSD{28.7}{0.8} & \gmResultSD{23.3}{2.5} & \gmResultSD{43.1}{0.7} \\
\midrule
\rowcolor{goldrow}
GoldiMask-D & \gmResultSD{77.9}{0.9} & \gmResultSD{42.8}{0.4} & \gmResultSD{35.9}{2.2} & \gmResultSD{20.3}{1.6} & \gmResultSD{44.2}{0.9} \\
\rowcolor{goldrow}
GoldiMask-R & \gmResultSD{\textbf{78.7}}{0.5} & \gmResultSD{\textbf{44.3}}{0.1} & \gmResultSD{30.1}{3.6} & \gmResultSD{\textbf{24.3}}{0.2} & \gmResultSD{44.4}{1.0} \\
\rowcolor{goldrow}
GoldiMask-T & \gmResultSD{78.3}{0.5} & \gmResultSD{43.5}{0.1} & \gmResultSD{\textbf{41.8}}{1.7} & \gmResultSD{24.2}{0.1} & \gmResultSD{\textbf{47.0}}{0.5} \\
\rowcolor{goldrow}
GoldiMask-P & \gmResultSD{\textbf{78.7}}{1.4} & \gmResultSD{44.0}{0.8} & \gmResultSD{40.8}{6.4} & \gmResultSD{21.1}{0.3} & \gmResultSD{46.2}{1.7} \\
\midrule
\multicolumn{6}{c}{\textbf{LLaDA-8B / LIFT-SFT-12K}}\\
\midrule
GIFT & \gmResultSD{79.8}{0.3} & \gmResultSD{36.1}{0.1} & \gmResultSD{23.6}{4.7} & \gmResultSD{8.2}{2.1} & \gmResultSD{36.9}{0.8} \\
CART & \gmResultSD{80.2}{0.9} & \gmResultSD{36.4}{2.8} & \gmResultSD{18.2}{3.6} & \gmResultSD{15.6}{1.2} & \gmResultSD{37.6}{1.7} \\
AGDO & \gmResultSD{\textbf{82.3}}{0.5} & \gmResultSD{34.7}{0.3} & \gmResultSD{17.8}{0.2} & \gmResultSD{10.3}{0.9} & \gmResultSD{36.3}{0.1} \\
LIFT & \gmResultSD{81.3}{0.7} & \gmResultSD{36.6}{0.5} & \gmResultSD{19.3}{3.5} & \gmResultSD{11.3}{0.3} & \gmResultSD{37.1}{1.0} \\
\midrule
\rowcolor{goldrow}
GoldiMask-D & \gmResultSD{82.2}{0.4} & \gmResultSD{\textbf{40.0}}{2.0} & \gmResultSD{23.0}{2.8} & \gmResultSD{\textbf{16.5}}{0.2} & \gmResultSD{40.4}{0.2} \\
\rowcolor{goldrow}
GoldiMask-R & \gmResultSD{82.0}{0.6} & \gmResultSD{36.5}{0.7} & \gmResultSD{17.8}{1.9} & \gmResultSD{14.4}{0.1} & \gmResultSD{37.7}{0.9} \\
\rowcolor{goldrow}
GoldiMask-T & \gmResultSD{81.7}{0.3} & \gmResultSD{39.0}{2.0} & \gmResultSD{\textbf{29.9}}{3.0} & \gmResultSD{14.0}{0.7} & \gmResultSD{\textbf{41.2}}{0.6} \\
\rowcolor{goldrow}
GoldiMask-P & \gmResultSD{81.0}{0.4} & \gmResultSD{37.1}{0.1} & \gmResultSD{25.4}{12.2} & \gmResultSD{14.1}{1.4} & \gmResultSD{39.4}{3.2} \\
\bottomrule
\end{tabular}
\end{table}

\subsection{AIME24 performance across seeds}
\label{app:aime24}

We additionally evaluate models fine-tuned on s1K on AIME24 using pass@16,
reporting means and sample standard deviations across three seeds in
Table~\ref{tab:aime24}. With only 30 problems, one additional solved problem
changes the score by approximately 3.3 percentage points. Several methods
show substantial variation across seeds, and the strongest method differs
across backbones. GoldiMask-D achieves the highest mean on Dream-7B, matches
LIFT on LLaDA-1.5, and slightly trails CART and LIFT on LLaDA-8B.

% Source: author's final corrected AIME24 scores, supplied 2026-09-22.
% Seed order: 42, 43, 44. Summaries use exact percentages from the integer
% solved counts out of 30 corresponding to the rounded scores below.
% Means and sample SDs (denominator n-1) are rounded only at the end.
% Method        LLaDA-8B         LLaDA-1.5        Dream-7B
% Vanilla       6.7,6.7,6.7      6.7,10.0,6.7     3.3,6.7,10.0
% GIFT          13.3,6.7,6.7     10.0,6.7,10.0    3.3,13.3,10.0
% CART          16.7,10.0,10.0   10.0,20.0,13.3   16.7,10.0,6.7
% LIFT          13.3,6.7,16.7    10.0,13.3,13.3   13.3,6.7,6.7
% GoldiMask-D   10.0,6.7,13.3    13.3,10.0,13.3   20.0,6.7,13.3
% AUTHOR CHECK: checkpoint rule for the supplied single-valued scores.
\begin{table}[!htbp]
\centering
\small
\setlength{\tabcolsep}{12pt}
\renewcommand{\arraystretch}{1.15}
\caption{\textbf{AIME24 pass@16 after fine-tuning on s1K.}
Values are mean accuracy (\%) $\pm$ sample standard deviation across three
seeds. Bold marks the highest mean in each column.}
\label{tab:aime24}
\begin{tabular}{@{}lccc@{}}
\toprule
\textbf{Method} & \textbf{LLaDA-8B} & \textbf{LLaDA-1.5} & \textbf{Dream-7B} \\
\midrule
Vanilla & \gmResultSD{6.7}{0.0} & \gmResultSD{7.8}{1.9} & \gmResultSD{6.7}{3.3} \\
GIFT & \gmResultSD{8.9}{3.8} & \gmResultSD{8.9}{1.9} & \gmResultSD{8.9}{5.1} \\
CART & \gmResultSD{\textbf{12.2}}{3.8} & \gmResultSD{\textbf{14.4}}{5.1} & \gmResultSD{11.1}{5.1} \\
LIFT & \gmResultSD{\textbf{12.2}}{5.1} & \gmResultSD{12.2}{1.9} & \gmResultSD{8.9}{3.8} \\
\midrule
\rowcolor{goldrow}
\textbf{GoldiMask-D} & \gmResultSD{10.0}{3.3} & \gmResultSD{12.2}{1.9} & \gmResultSD{\textbf{13.3}}{6.7} \\
\bottomrule
\end{tabular}
\end{table}

\section{Parallel-Decoding Behavior}
\label{app:parallel-decoding}

This section examines the parallel-decoding behavior observed after GoldiMask
fine-tuning. We describe the decoding process, evaluation setup, and results. GoldiMask itself operates during training; the analysis here
concerns the behavior of the resulting model at inference time.

\subsection{Decoding process}
\label{app:pd-description}

A masked diffusion decoder repeatedly predicts unresolved response positions
from a partially completed sequence. Several positions can be filled in the
same iteration because their predictions use the same current context. In a
confidence-threshold decoder, a position is eligible for commitment when the
probability of its most likely token reaches a threshold $\tau$. Committed
tokens then provide context for subsequent predictions. The choice of
threshold trades the size of the proposed batch against the reliability of
its predictions; any fallback or revision rule is also part of the decoder.

Confidence is observable during generation, but correctness is not. In
reference-based diagnostics we can additionally count predictions that both
match the reference token and satisfy the threshold. We call this quantity
\emph{oracle-safe parallelism}. It is a diagnostic of token recovery at a
fixed state, distinct from task-level correctness of freely generated text.

\subsection{Evaluation setup}
\label{app:pd-setup}

\paragraph{Models, tasks, and decoder.}
We use the same backbones and training corpora as Section~\ref{sec:main-results},
with the checkpoints used in the original parallel-decoding evaluation:
LLaDA-8B-Instruct trained on LIFT-SFT-12K, and LLaDA-1.5 and Dream-7B trained
on s1K, for GoldiMask-D, -R and -P and for the retrained baselines LIFT, CART,
GIFT and AGDO. Benchmarks are GSM8K,
MATH-500, Countdown and Sudoku, with maximum generation length $L=512$ for
MATH-500, $L=256$ for GSM8K, and
$L=128$ for Countdown and Sudoku. Following
dParallel~\citep{chen2026dparallel}, we use greedy confidence-threshold decoding
in 32-token blocks processed from left to right. Within each block, we
permanently commit masked-token predictions whose confidence reaches $\tau$,
falling back to the most confident prediction if none qualifies. All methods
decode the full prescribed length without EOS-based early stopping. We sweep
$\tau\in\{0.70,0.80,0.90,0.95\}$ and report, per setting and threshold, task
accuracy and the mean number of decoding iterations per example.

\paragraph{End-to-end decoding.}
The reported measurements are end-to-end: matched prompts and decoder settings,
with output quality reported together with total decoding iterations. Early
commitments change later context, so these trajectories are not comparable
state by state. Any wall-clock claim would additionally require matched
hardware and timing; a reduction in iteration count is not a measured latency
speedup.

\paragraph{Shared-state diagnostics.}
A complementary protocol holds partially completed reference sequences fixed
across models and measures correct eligible predictions, threshold precision,
and changes in correct eligibility. This separates prediction behavior from
visited states; these diagnostics are not reported below.

\subsection{Results}
\label{app:pd-results}

Figures~\ref{fig:pd-main} and~\ref{fig:pd-appendix} cover GSM8K/MATH-500 and
Countdown/Sudoku, respectively. Comparisons use the most accurate baseline
at each threshold.

\paragraph{Reasoning benchmarks.}
On GSM8K and MATH-500, all three variants on all three backbones reduce steps
by 15--21 percent at $\tau=0.95$, with accuracy differences of $-2.2$ to
$+1.4$ points. Step reductions span 12--32 percent across thresholds and are
largest at loose thresholds. On Dream-7B, however, accuracy trails the best
baseline by several points at $\tau=0.70$ and $0.80$, so the favorable
trade-off holds only at tight thresholds.

\paragraph{Countdown and Sudoku.}
These puzzles test whether parallel-decoding gains extend to globally
constrained outputs. Sudoku requires joint row, column, and subgrid
consistency; Countdown requires an expression using the available numbers
to reach a target. Token-wise confidence does not ensure joint validity,
and permanent commitments prevent later correction. This may explain the
more variable accuracy--iteration trade-offs, but our experiments neither
isolate this mechanism nor establish that these tasks are inherently
unsuitable for parallel decoding.

At $\tau=0.95$, step reductions range from $-8$ to $+16$ percent and accuracy
differences from $-11$ to $+10$ points. On Countdown, GoldiMask-D and -P
substantially outperform the best baseline on LLaDA-8B, while all variants
trail CART on LLaDA-1.5 and LIFT on Dream-7B. Sudoku accuracy differences
remain within three points in every setting.

\begin{figure}[htb!]
\centering
\includegraphics[width=0.9\linewidth]{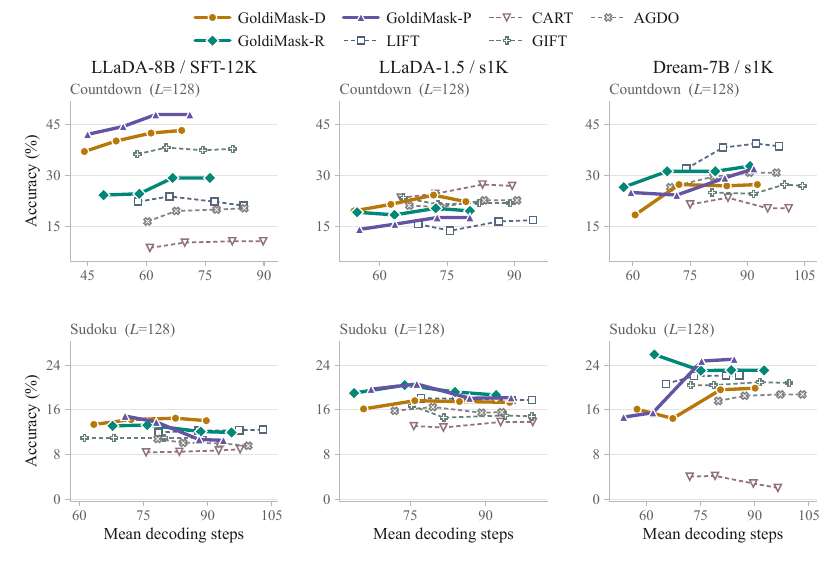}
\caption{\textbf{Accuracy versus decoding cost on Countdown and Sudoku.} Same
conventions as Figure~\ref{fig:pd-main}: three backbones, thresholds
$\tau\in\{0.70,0.80,0.90,0.95\}$ from left to right, GoldiMask-D, -R and -P
solid against the SFT baselines dashed. Countdown uses $L=128$ on all
backbones.}
\label{fig:pd-appendix}
\end{figure}

\section{Evaluation Protocol and Compute Resources}
\label{sec:computeresources}

\paragraph{Checkpoint and generation-length selection.}
\label{app:checkpoint-selection}
For all experiments, we select each seed's checkpoint solely by the
lowest loss on a held-out validation split, following LIFT. For reasoning comparisons other than Table~\ref{tab:fixed-task-lengths},
generation length is selected separately for each benchmark from
$\{128,256,512\}$, with the same procedure across methods.

\paragraph{Training setup and compute resources.}
We follow the training and evaluation protocol of
LIFT~\citep{parashar2026learnabilityinformed-lift}. Post-training experiments
were conducted using four NVIDIA B200 GPUs. We reimplement
AGDO~\citep{deng2026beyond-agdo} in our shared harness with matched training
budgets and hyperparameters. Each masked candidate is scored by its attention
mass on visible context, and the highest-scoring candidates are revealed.
The remaining supervised tokens receive weights $1+\gamma I_i$, where $I_i$
is AGDO's token importance score and $\gamma=100$. We normalize these weights
to have mean one for comparability. This implementation does not incorporate
GoldiMask's selection objective or weighting mechanism.

\paragraph{Training time.}
Table~\ref{tab:training-time} reports wall-clock training time for
LLaDA-8B-Instruct relative to Vanilla SFT at matched optimizer steps and
settings on four B200 GPUs. The measurements include all training
operations and exclude model loading and evaluation. The reported relative
times are averaged across s1K and LIFT-SFT-12K.

% Ratios preserved from rebuttal/single_pass_training.tex. Hardware confirmed
% as four B200 GPUs by the author on 2026-09-24.
\begin{table}[!htbp]
\centering\small
\setlength{\tabcolsep}{10pt}
\caption{\textbf{Relative training time.} LLaDA-8B-Instruct on four B200
GPUs. Values are averaged across s1K and LIFT-SFT-12K, with Vanilla SFT
as the $1.00\times$ reference.}
\label{tab:training-time}
\begin{tabular}{@{}lc@{}}
\toprule
\textbf{Method} & \textbf{Training time relative to Vanilla SFT} \\
\midrule
Vanilla SFT & $1.00\times$ \\
CART & $1.01\times$ \\
GIFT & $1.17\times$ \\
AGDO & $1.17\times$ \\
LIFT & $1.24\times$ \\
\midrule
\rowcolor{goldrow}
GoldiMask-D & $1.21\times$ \\
\rowcolor{goldrow}
GoldiMask-R & $1.23\times$ \\
\bottomrule
\end{tabular}
\end{table}

\section{Loss Weighting}
\label{app:gm-weighting}

After selecting context, GoldiMask converts the measured utilities into loss
weights. Section~\ref{sec:component-results} compares alternative weighting
schemes on LLaDA-8B trained on s1K; here we explain the allocation properties
of the chosen rule.
Water-filling solves a fixed-utility allocation problem
(Proposition~\ref{prop:waterfill}), and the cap bounds weight concentration
(Corollary~\ref{cor:ess}). These properties concern the allocation of
loss weight within a training example; they do not imply a fixed gradient
norm or optimal downstream learning.

\subsection{Properties of the loss weighting}
\label{app:gm-weight-properties}

%The same priority function is evaluated at post-reveal confidence in Equation~\ref{eq:utility}. Multiplying it by positive context gain combines evidence that a target responds with its remaining priority.
The utility in Equation~\ref{eq:utility} multiplies the positive log-gain of a target by the priority function $\lambda$ evaluated at its post-reveal
confidence. The product combines evidence that the target responds to the
selected context with its remaining priority.
%This is a designed allocation score; the local-update argument motivates its confidence factor, not the entire product with log-gain.

Each training example has a fixed total loss-weight mass $\sum_i w_i = n$ to divide among its supervised tokens. ``Proportionally fair'' is the standard name for one particular answer: an
allocation is proportionally fair when no reallocation can raise the shares of
some tokens by more, in utility-weighted proportional terms, than it lowers
those of the others. % CHANGED (aggregate, not pairwise, definition)
That criterion is exactly
what maximizing $\sum_i u_i \log w_i$ encodes, because the logarithm measures
each token's satisfaction in \emph{relative} terms, so halving any token's
weight costs the same amount whether it held a large share or a small one, and
the utilities $u_i$ set how loudly each token argues. The same objective defines
the Nash bargaining solution, which is why the two coincide below.

\begin{proposition}[Water-filling weights are proportionally fair]
\label{prop:waterfill}
Fix utilities $u_1, \dots, u_n \ge 0$, let $P = \{i : u_i > 0\}$ with
$n_+ = |P|$, a cap $c > 1$, and the budget polytope
$W = \{w \in \mathbb{R}^n : \sum_i w_i = n,\ 0 \le w_i \le c\}$ (nonempty since
$c > 1$). Consider the problem $\max_{w \in W} \sum_{i \in P} u_i \log w_i$. % CHANGED
\begin{enumerate}
\item[(a)] If $n_+ c > n$, it has the unique solution
$w_i^\star = \min\!\big(u_i/\nu^\star,\ c\big)$ for $i \in P$ and
$w_i^\star = 0$ for $i \notin P$, where $\nu^\star > 0$ is the unique root of
$\sum_{i \in P} \min(u_i/\nu, c) = n$. Equivalently, $w^\star$ is the
asymmetric Nash bargaining solution over the budget set with bargaining powers
$u_i$ and disagreement point $0$. % CHANGED (original statement, restricted to P)
\item[(b)] If $n_+c=n>0$, the unique weights are $c$ on $P$ and zero
elsewhere, although the water-filling level is not unique.
\item[(c)] If $0<n_+ c < n$, every solution has $w_i^\star = c$ for all
$i \in P$; the remaining budget $n - n_+ c$ may be spread over $P^{c}$
arbitrarily, and the implementation spreads it uniformly. % NEW
\item[(d)] If all utilities vanish, the objective is constant on $W$ and the
implementation convention is $w_i=1$ for every $i$.
\end{enumerate}
\end{proposition}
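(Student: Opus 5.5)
The plan is to treat the problem as a concave maximization over a compact polytope and read off the solution from the KKT conditions. The objective $G(w)=\sum_{i\in P}u_i\log w_i$ (with the convention $\log 0=-\infty$) is continuous on the part of $W$ where $w_i>0$ for all $i\in P$. It is strictly concave in the coordinates $w_P$ and does not depend on $w_{P^c}$. Because $c>1$, the point $w\equiv 1$ is feasible and gives a finite value whenever $n_+\ge 1$. A maximizer therefore exists by compactness of the superlevel set $\{w\in W: G(w)\ge G(\mathbf 1)\}$, and every maximizer has $w_i>0$ on $P$. Strict concavity in $w_P$ makes the $P$-block $w_P^\star$ unique. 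All that remains is to identify $w_P^\star$ and describe the freedom in $w_{P^c}$.

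For the identification, I would write the Lagrangian with multiplier $\nu$ for $\sum_i w_i=n$ and multipliers for $0\le w_i\le c$. Since the constraints are linear, KKT is necessary and sufficient.
\begin{itemize}
\item For $i\in P$, stationarity gives $u_i/w_i=\nu+\beta_i$, with $\beta_i\ge 0$ active only when $w_i=c$. This yields $w_i=\min(u_i/\nu,c)$ when $\nu>0$, and $w_i=c$ for all $i\in P$ when $\nu\le 0$.
\item For $i\notin P$, stationarity gives $0=\nu-\alpha_i+\beta_i$. So $\nu>0$ forces $w_i=0$, $\nu<0$ forces $w_i=c$, and $\nu=0$ leaves $w_i$ free in $[0,c]$.
\end{itemize}

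Next I would study $h(\nu)=\sum_{i\in P}\min(u_i/\nu,c)$ on $(0,\infty)$. It is continuous and nonincreasing. It equals $n_+c$ for $\nu\le\min_P u_i/c$, is strictly decreasing for $\nu\ge\min_P u_i/c$, and tends to $0$ as $\nu\to\infty$.

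The cases then follow.
\begin{itemize}
\item[(a)] If $n_+c>n$, there is a unique root $\nu^\star>0$ of $h(\nu)=n$. The KKT point with $w_{P^c}=0$ is feasible, so it is optimal. A solution with $\nu\le 0$ is impossible, because it would need $\sum_P w_i=n_+c>n$. Uniqueness of the whole vector follows because $\nu>0$ pins $w_{P^c}=0$; alternatively, $w^\star_P$ is unique and must sum to $n$ since any slack could be moved into $P$ to raise $G$. The Nash-bargaining equivalence is just the definition of the asymmetric Nash solution: maximize $\prod_{i\in P} w_i^{u_i}$ over the feasible set with disagreement point $0$.
\item[(b)] If $n_+c=n$, the only feasible option is $w_P=c$ and $w_{P^c}=0$. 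Every $\nu\in(0,\min_P u_i/c]$ solves $h(\nu)=n$, which gives the non-uniqueness of the level.
\item[(c)] If $n_+c<n$, raising any $w_i$ with $i\in P$ below $c$ at the expense of $P^c$ strictly increases $G$. Hence $w_P=c$, and every split of the remainder over $P^c$ within the box gives the same value. The remainder fits because $|P^c|c\ge n-n_+c$, which holds since $nc\ge n$. The implementation's choice is the uniform split.
\item[(d)] If $P=\emptyset$, then $G\equiv 0$ on $W$ and the stated convention applies.
\end{itemize}

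I expect the main obstacle to be rigor at the boundary. The objective is $-\infty$ on the faces $w_i=0$ for $i\in P$, so KKT cannot be applied blindly. I would handle this first by restricting to the superlevel set where $w_P\ge\epsilon>0$; on that set $G$ is smooth and Slater's condition holds. The second delicate point is the sign of $\nu$ in cases (a) versus (c), which the argument on the sum of $w_P$ settles.
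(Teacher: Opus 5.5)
Your proposal is correct and takes essentially the same route as the paper. Both arguments get existence and uniqueness of the $P$-block from strict concavity together with the $-\infty$ behaviour at $w_i=0$, use KKT stationarity to obtain $w_i=\min(u_i/\nu,c)$, and get the unique level from the monotonicity of $\sum_{i\in P}\min(u_i/\nu,c)$; the only organizational difference is that you handle the $P^{c}$ coordinates through the sign of $\nu$ in the full KKT system, whereas the paper first reduces to $W_P$ by a mass-shifting argument. One wording fix: in (b) the all-capped point is not the only \emph{feasible} point of $W$ (e.g.\ $w\equiv 1\in W$), but it is the only KKT point, since $\nu=0$ forces $w_P=c$ and then $w_{P^{c}}=0$ by the budget, while $\nu>0$ forces $w_{P^{c}}=0$ and then $\sum_{i\in P}\min(u_i/\nu,c)=n_+c$ caps every $i\in P$.
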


\begin{proof}
\emph{Reduction to $P$.} % NEW paragraph
The objective does not depend on $w_i$ for $i \notin P$, and moving mass from
such an $i$ to any uncapped $j \in P$ strictly increases it. Hence at any
maximizer either every token in $P$ is capped or $w_i = 0$ on $P^{c}$. If
$0<n_+ c < n$, capping all of $P$ is forced and leaves $n - n_+ c > 0$ to be
placed on $P^{c}$, where the objective is constant; this is (c). If
$n_+ c \ge n$, the budget can be met inside $P$, so $w_i = 0$ on $P^{c}$ and
the problem reduces to the coordinates in $P$, that is, to the polytope $W_P = \{w \in \mathbb{R}^{P} : \sum_{i \in P} w_i = n,\ 0 \le w_i \le c\}$. The rest of the proof treats
this case. If $P$ is empty, the objective is constant, and uniform weighting
is feasible; this gives (d).

\emph{Existence and uniqueness.} On $W_P \cap \{w > 0\}$ the objective
$\sum_{i \in P} u_i \log w_i$ is strictly concave, because each term is a positive % CHANGED
multiple of the strictly concave $\log$. As any coordinate $w_i \to 0$ the
objective tends to $-\infty$, so no maximizer touches the lower boundary, and
$W_P$ is compact and convex. The objective is upper semicontinuous on $W_P$ with
values in $[-\infty,\infty)$, so a maximum is attained, and by strict concavity
at exactly one point. % CHANGED

\emph{Shape of the solution.} At the maximizer, the Karush--Kuhn--Tucker
conditions hold: there are multipliers $\nu$ for the budget $\sum_i w_i = n$ and
$\mu_i \geq 0$ for the caps $w_i \leq c$ with
\[
\frac{u_i}{w_i} \;=\; \nu + \mu_i,
\qquad
\mu_i\,(w_i - c) = 0 .
\]
Here $\nu > 0$ in case (a): otherwise every $\mu_i > 0$, every cap binds, and
$\sum_{i \in P} w_i = n_+ c > n$. At equality, every positive-utility
coordinate must instead be capped, which gives the unique weights in (b).
Read the two cases off complementary slackness. If the cap is slack
($w_i < c$), then $\mu_i = 0$ and the stationarity condition gives
$w_i = u_i/\nu$. If the cap binds, then $w_i = c$, and stationarity requires
$\mu_i = u_i/c - \nu \geq 0$, i.e.\ this case occurs exactly when
$u_i/\nu \geq c$. Both cases are summarized by
\[
w_i \;=\; \min\!\big(u_i/\nu,\; c\big).
\]

\emph{The level $\nu^\star$ exists and is unique.} Define
$g(\nu) = \sum_{i \in P} \min(u_i/\nu, c)$, the total weight assigned at level $\nu$. % CHANGED
Each summand is continuous and nonincreasing in $\nu$, and strictly decreasing
wherever $u_i/\nu < c$; moreover
\[
\lim_{\nu \to 0^+} g(\nu) = n_+ c, % CHANGED
\qquad
\lim_{\nu \to \infty} g(\nu) = 0 .
\]
Since $n < n_+ c$ in case (a), the intermediate value theorem gives a root of
$g(\nu) = n$, and strict monotonicity below $n_+ c$ makes it unique. At
$n=n_+c$, every $0<\nu\leq\min_{i\in P}u_i/c$ gives the same all-capped
weights, so the weights are unique but the level is not. In case (a), the root is the
$\nu^\star$ found by bisection in the implementation.

\emph{Nash bargaining.} The asymmetric Nash bargaining solution with
bargaining powers $u_i$ and disagreement point $0$ maximizes
$\prod_{i \in P} w_i^{\,u_i}$ over the feasible set, and taking logarithms turns this % CHANGED
product into $\sum_{i \in P} u_i \log w_i$, the objective above. The two solutions % CHANGED
therefore coincide.
\end{proof}

\begin{corollary}[A bound on the effective number of supervised tokens]
\label{cor:ess}
For any $w \in W$, the effective sample size $n_{\mathrm{eff}} = (\sum_i w_i)^2 / \sum_i w_i^2$ satisfies $n_{\mathrm{eff}} \ge n / c$.
\end{corollary}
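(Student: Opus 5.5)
The plan is to bound the denominator $\sum_i w_i^2$ from above using only the two constraints that define $W$: the budget $\sum_i w_i = n$ and the box $0\le w_i\le c$. The numerator is fixed at $(\sum_i w_i)^2 = n^2$ for every $w\in W$, so the claim $n_{\mathrm{eff}}\ge n/c$ is equivalent to $\sum_i w_i^2 \le c\,n$.

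The key step is the pointwise inequality $w_i^2 \le c\,w_i$. It holds because $0\le w_i\le c$: multiplying $w_i\le c$ by the nonnegative number $w_i$ gives $w_i^2\le c w_i$. Summing over $i$ yields
\[
\sum_i w_i^2 \;\le\; c\sum_i w_i \;=\; c\,n .
\]
Dividing then gives
\[
n_{\mathrm{eff}} = \frac{n^2}{\sum_i w_i^2} \ge \frac{n^2}{c\,n} = \frac{n}{c}.
\]
The denominator is strictly positive, since $\sum_i w_i = n>0$ forces some $w_i>0$, so the ratio is well defined.

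There is no genuine obstacle here; the only point needing care is the nonnegativity $w_i\ge 0$, which the argument uses essentially. That condition is built into $W$, so it applies in particular to the water-filling weights of Proposition~\ref{prop:waterfill} in all of its cases (a)–(d). I would close by noting when the bound is tight: equality holds exactly when each $w_i\in\{0,c\}$, i.e.\ $n/c$ tokens carry the full cap. This is the situation of case (b) of Proposition~\ref{prop:waterfill}, and it shows the bound cannot be improved.
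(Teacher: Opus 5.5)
Your proof is correct and is the paper's argument: from $0\le w_i\le c$ you get $\sum_i w_i^2 \le c\sum_i w_i = cn$, and dividing $n^2$ by this gives $n_{\mathrm{eff}}\ge n/c$. Your remarks on positivity of the denominator and on tightness are accurate additions the paper omits. Equality requires every $w_i\in\{0,c\}$, which is attainable only when $n/c$ is an integer, as in case (b).
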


\begin{proof}
$\sum w_i^2 \le c \sum w_i = cn$, so $n_{\mathrm{eff}} \ge n^2 / (cn) = n/c$.
\end{proof}

This is an effective-size bound on the loss weights. It does not count
independent gradients or imply a fixed gradient norm. The cap limits weight
concentration, while water-filling preserves total loss-weight mass.

Proposition~\ref{prop:waterfill} shows that whenever the positive utilities can absorb the budget (i.e., $n_+c \ge n$), the water-filling weights are the \emph{unique} maximizer of $\sum_{i \in P} u_i \log w_i$ for a fixed total loss-weight mass under a per-token cap. Compared to normalize-then-clip, $w_i = \min(n\,u_i/\sum_j u_j,\ c)$,
water-filling has the same form but a different level. Normalize-then-clip
keeps $\nu' = \sum_j u_j/n$, while water-filling adjusts it to $\nu^\star$ so
that the weights still sum to $n$. Whenever an uncapped normalized weight exceeds the cap, the
normalize-then-clip weights sum to less than $n$: the clipped mass
$\sum_i \big(n\,u_i/\sum_j u_j - c\big)_+$ is lost, not redistributed. It
therefore solves the same problem for a smaller mass.

\section{Supervision Priority and Contextual Support}
\label{app:gm-functions}
\label{sec:measurements}

Our approach uses the supervision priority $\lambda(p)=p(1-p)^3$ and the
contextual support response $\phi(s)=s/(s+0.05)$ to allocate supervision and
revealed context. In what follows, we present theoretical and empirical support
for these choices. Section~\ref{app:gm-lambda-theory} shows that weighting the
local probability response by remaining prediction headroom yields our priority
function under a simplified training update, giving it an explicit mathematical
motivation. Section~\ref{app:gm-support-theory} establishes that $\phi$ rewards
additional contextual support with diminishing returns, reducing the marginal
reward for further supporting an already well-supported target.
Section~\ref{app:gm-training-study} shows that sampling supervision according to
$\lambda$ improves both held-out denoising loss and average correct-token
probability over uniform random selection on both tested backbones.
Section~\ref{app:gm-lambda-evidence} finds that additional context produces the
largest absolute probability gains at intermediate initial confidence across
the tested models and corpora, supporting confidence-dependent context
allocation. Section~\ref{app:gm-support} finds that a saturating response fits
contextual support measurements better than a linear response in high-noise
settings, with fitted support scales consistent with our default $s_0=0.05$.
This section does not establish that these functions are optimal, but explains
why we chose them. Additional ablation studies in Section~\ref{sec:component-results}
provide further empirical support for these choices.

\subsection{Motivation for the supervision-priority function}
\label{app:gm-lambda-theory}

GoldiMask allocates a limited supervision budget among masked targets. We seek
a priority that accounts for both a target's response to additional supervision
and its remaining room for improvement. To motivate this choice, we consider
how a small cross-entropy update changes the probability assigned to the correct
token.

Under a simple local model, this probability increase scales as $p(1-p)^2$.
We additionally weight the response by the remaining probability headroom,
$1-p$, to place greater emphasis on predictions that still need improvement.
This gives $\lambda(p)=p(1-p)^3$. The headroom factor shifts the maximum
from $p=1/3$ for the response alone to $p=1/4$ for our priority. The following
proposition formalizes this construction.

\begin{proposition}[Local response weighted by remaining headroom]
\label{prop:gm-local-priority}
Let $p=\sigma(m)\in(0,1)$ be the gold-token probability expressed through a
gold-versus-rest margin $m$. Apply one gradient-descent step of size $\eta$
to $\ell(m)=-\log\sigma(m)$, treating $m$ as the trainable scalar. If $p^+$
is the updated probability, then, as $\eta\to0$,
\begin{equation}
p^+-p=\eta p(1-p)^2+O(\eta^2),\qquad
(1-p)(p^+-p)=\eta p(1-p)^3+O(\eta^2).
\label{eq:gm-headroom-priority}
\end{equation}
The confidence factor in the headroom-weighted response,
$\lambda(p)=p(1-p)^3$, has its unique maximum at $p=1/4$.
\end{proposition}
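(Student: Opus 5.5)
The argument is a direct Taylor expansion in the step size, followed by a one-variable maximization. First, compute the gradient of the loss with respect to the margin. Since $\sigma'(m)=\sigma(m)(1-\sigma(m))$, we have $\ell'(m)=-(1-\sigma(m))=-(1-p)$. One gradient-descent step therefore gives
\[
m^+=m-\eta\,\ell'(m)=m+\eta(1-p).
\]

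Next, expand the updated probability $p^+=\sigma(m^+)$ to first order around $m$:
\[
p^+-p=\sigma'(m)\,\eta(1-p)+\tfrac12\sigma''(\xi)\,\eta^2(1-p)^2
\]
for some $\xi$ between $m$ and $m^+$. Because $\sigma''$ is bounded on $\mathbb{R}$ (indeed $|\sigma''|\le 1/4$), the remainder is $O(\eta^2)$ with the margin held fixed. Substituting $\sigma'(m)=p(1-p)$ gives $p^+-p=\eta p(1-p)^2+O(\eta^2)$. Multiplying by the fixed factor $1-p\in(0,1)$ keeps the remainder $O(\eta^2)$ and yields the second identity in \eqref{eq:gm-headroom-priority}.

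For the maximizer, differentiate $\lambda(p)=p(1-p)^3$ on $[0,1]$:
\[
\lambda'(p)=(1-p)^3-3p(1-p)^2=(1-p)^2(1-4p).
\]
On $(0,1)$ this vanishes only at $p=1/4$. Moreover, $\lambda'>0$ on $(0,1/4)$ and $\lambda'<0$ on $(1/4,1)$. Hence $\lambda$ is strictly increasing and then strictly decreasing, so $p=1/4$ is the unique maximizer, with $\lambda(1/4)=27/256$. Since $\lambda(0)=\lambda(1)=0$, the maximum is also unique on the closed interval. The same computation applied to $p(1-p)^2$ gives the factor $(1-p)(1-3p)$, which recovers the peak at $1/3$ mentioned in the text.

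There is no real obstacle. The only point that needs care is stating precisely what $O(\eta^2)$ means, namely uniformity in $\eta$ for fixed $p$. This follows from the bounded second derivative of $\sigma$, so the remainder is bounded by $\tfrac18\eta^2(1-p)^2$.
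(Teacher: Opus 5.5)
Your proof is correct and follows essentially the same route as the paper: you derive the gradient step $m^+=m+\eta(1-p)$, Taylor-expand $\sigma$ using $\sigma'(m)=p(1-p)$, multiply by $1-p$, and locate the maximum of $\lambda$ from $\lambda'(p)=(1-p)^2(1-4p)$. Your explicit Lagrange-remainder bound via $|\sigma''|\le 1/4$ is a valid (loose) refinement of the paper's bare $O(\eta^2)$.
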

\begin{proof}
Since $\ell'(m)=-(1-p)$, the updated margin is $m^+=m+\eta(1-p)$.
Using $\sigma'(m)=p(1-p)$, Taylor expansion gives
$\sigma(m^+)-\sigma(m)=\eta p(1-p)^2+O(\eta^2)$.
Multiplying by $1-p$ gives the second identity. Finally,
$\lambda'(p)=(1-p)^2(1-4p)$ is positive below $1/4$ and negative above it,
while $\lambda(0)=\lambda(1)=0$.
\end{proof}

This construction gives our priority a concrete interpretation as local
probability response weighted by remaining headroom. The additional headroom
factor expresses our allocation preference. In GoldiMask's reveal objective,
this priority is multiplied by contextual support, favoring reveal sets that
provide useful evidence to high-priority targets left masked.

The same confidence factor also appears beyond the scalar model. For a direct
cross-entropy update of all vocabulary logits, let $q$ be the softmax
probability vector, $y$ the correct token, and $q_y=p$. Then
\[
(1-p)\Delta p
=\eta\left(1+\sum_{j\ne y}r_j^2\right)p(1-p)^3+O(\eta^2),
\]
where $r_j=q_j/(1-p)$ describes the conditional probability distribution over
incorrect tokens. The additional factor lies between $1+1/(V-1)$ and $2$,
where $V$ is the vocabulary size. Updates to shared model parameters also
depend on parameter sensitivity and interactions among supervised targets.
These calculations motivate a simple confidence-based proxy;
Section~\ref{app:gm-training-study} examines its usefulness for allocating
supervision during actual fine-tuning.

\subsection{Motivation for the saturating support function}
\label{app:gm-support-theory}

After assigning supervision priorities, GoldiMask must assess how much support
a reveal set provides to the remaining targets. A linear support score gives
every additional unit of attention support the same value, regardless of how
much support a target already receives. We instead use a saturating response,
so that additional support remains beneficial but contributes less as support
accumulates. This encourages the reveal objective to account for targets that
still lack contextual support.

For fixed attention, the support received by target $i$ from the revealed set
$R$ is $S_i(R)=\sum_{j\in R}A_{ij}$. We map this support to a bounded score
using
\[
\phi(s)=\frac{s}{s+s_0},\qquad s_0>0.
\]
The parameter $s_0$ controls the support scale at which the response reaches
half its maximum. GoldiMask uses $s_0=0.05$.

\begin{proposition}[Diminishing returns from support]
\label{prop:gm-support-response}
For $s_0>0$, the response $\phi(s)=s/(s+s_0)$ is increasing and concave on
$s\geq0$, with
\[
\phi(0)=0,\qquad \phi(s_0)=\tfrac12,\qquad
\lim_{s\to\infty}\phi(s)=1.
\]
Consequently, for any fixed additional support $a\geq0$, the gain
$\phi(s+a)-\phi(s)$ is nonincreasing in $s$.
\end{proposition}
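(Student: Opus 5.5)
The plan is to compute the first two derivatives of $\phi(s)=s/(s+s_0)$ and read off every claim from them. Rewriting $\phi(s)=1-s_0/(s+s_0)$ on $s\geq 0$ gives
\[
\phi'(s)=\frac{s_0}{(s+s_0)^2}>0,
\qquad
\phi''(s)=-\frac{2s_0}{(s+s_0)^3}<0,
\]
since $s_0>0$ and $s+s_0>0$. Hence $\phi$ is strictly increasing and strictly concave on $[0,\infty)$.

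The three boundary values follow by direct substitution into the rewritten form. We have $\phi(0)=0$ and $\phi(s_0)=s_0/(2s_0)=\tfrac12$. Moreover $s_0/(s+s_0)\to 0$ as $s\to\infty$, so $\phi(s)\to 1$.

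For the diminishing-returns consequence, fix $a\geq 0$ and set $d_a(s)=\phi(s+a)-\phi(s)$. If $a=0$ the gain is identically zero, so assume $a>0$. Then
\[
d_a'(s)=\phi'(s+a)-\phi'(s)\leq 0,
\]
because $\phi'$ is decreasing (as $\phi''<0$). So $d_a$ is nonincreasing in $s$.

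A derivative-free route reaches the same conclusion. By the fundamental theorem of calculus,
\[
d_a(s)=\int_s^{s+a}\phi'(r)\,dr=\int_0^{a}\phi'(s+v)\,dv,
\]
and the integrand is pointwise nonincreasing in $s$. Alternatively, one can invoke the same concavity fact already used in Step 1 of the proof of Proposition~\ref{prop:submodular}. The gain is also nonnegative, since $\phi$ is nondecreasing.

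No step is a real obstacle. The only points needing care are keeping the domain restricted to $s\geq 0$, where $s+s_0>0$ so no singularity arises, and noting that the $a=0$ case is trivial. Combined with Proposition~\ref{prop:submodular}, this result confirms that GoldiMask's reveal objective $F$ is submodular with $F\geq 0$.
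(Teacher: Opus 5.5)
Your proposal is correct and follows essentially the same route as the paper. You compute the same derivatives $\phi'(s)=s_0/(s+s_0)^2$ and $\phi''(s)=-2s_0/(s+s_0)^3$ and get the endpoint values by substitution and a limit; the only cosmetic difference is that the paper reads off diminishing returns from the closed form $\phi(s+a)-\phi(s)=a s_0/\bigl((s+s_0)(s+a+s_0)\bigr)$, whose denominator grows with $s$, whereas you deduce it from $\phi'$ being decreasing.
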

\begin{proof}
Direct differentiation gives
\[
\phi'(s)=\frac{s_0}{(s+s_0)^2}>0,\qquad
\phi''(s)=-\frac{2s_0}{(s+s_0)^3}<0.
\]
The endpoint identities follow by substitution and taking the limit. Moreover,
\[
\phi(s+a)-\phi(s)=\frac{a s_0}{(s+s_0)(s+a+s_0)},
\]
which is nonincreasing in $s$.
\end{proof}

Thus, for targets with equal supervision priority, the same additional support
earns a larger marginal reward when existing support is smaller. Combined with
$\lambda$, this lets the objective account for both target priority and
diminishing contextual benefit. Concavity also underpins the submodularity
result in Proposition~\ref{prop:submodular}. Section~\ref{app:gm-support}
examines saturation empirically and provides the measurements informing our
default support scale.

\subsection{Study 1: confidence-based supervision and subsequent learning}
\label{app:gm-training-study}

Section~\ref{app:gm-lambda-theory} motivates $\lambda(p)=p(1-p)^3$ through
local probability response and remaining headroom. We now ask whether using
this priority to choose supervised targets improves learning on unseen
examples. The experiment holds the number of supervised tokens fixed and
changes only how those tokens are selected.

\paragraph{A matched supervision budget.}
We train LLaDA-8B and LLaDA-1.5 on 600 examples from s1K, indexed
$[300,900)$, for 300 optimizer steps with two seeds per backbone. Within each
seed, all rules start from the same model state and use the same examples,
order, and corruptions. For each example, exactly 32 masked positions carry
the training loss. The remaining masked positions stay masked but do not
contribute directly to that example's loss. We compare uniform random
selection with sampling proportional to $p(1-p)^3$, $1-p$, or $p(1-p)$,
where $p$ is the current model's probability of the correct token before the
update. This isolates the supervision-priority component; the experiment does
not run GoldiMask's full reveal-selection and capped-weighting procedure.

All runs use LoRA with rank 128, scaling 256, and dropout 0.05 on the
query/key/value projections. Optimization uses AdamW with learning rate
$5\times10^{-6}$, weight decay 0.1, gradient clipping at 1.0, and global
batch size 8. The four reported rules give 16 training blocks across the two
backbones and two seeds.
% The full eight-rule experiment campaign comprised 32 blocks and approximately
% 55 A40 GPU-hours, including the band-only arms and all-token reference omitted here.

\paragraph{Measuring learning on unseen examples.}
At steps 0, 100, and 300, we evaluate 90 held-out examples using the same
fixed masks at masking rates $t\in\{0.25,0.5,0.75\}$. For a masked target,
negative log-likelihood (NLL) is $-\log p$, where $p$ is the probability of
its correct token. We report two changes from before to after training.
$\Delta\mathrm{NLL}$ is the change in average held-out denoising loss, so
negative values indicate improvement. $\Delta p$ is the average increase
in correct-token probability, evaluated on held-out targets whose initial
probability is below 0.9, so positive values indicate improvement. These
measurements compare model checkpoints on fixed inputs, rather than responses
to newly revealed context. NLL and probability gain summarize different
aspects of prediction quality and can rank supervision rules differently.

\begin{table}[htb!]
\centering
\caption{\textbf{Learning under a fixed supervision budget.} Changes after
300 training steps, with 32 supervised positions per training example.
$\Delta\mathrm{NLL}$ is the change in held-out negative log-likelihood;
$\Delta p$ is the mean correct-token probability change for held-out targets
with initial $p<0.9$. Arrows indicate the preferred direction. Bold values
are best within each column; shading identifies our priority.}
\label{tab:gm-training-study}
\begingroup
\small
\setlength{\tabcolsep}{14pt}
\renewcommand{\arraystretch}{1.18}
\begin{tabular}{lrrrr}
\toprule
& \multicolumn{2}{c}{\textbf{LLaDA-8B}} & \multicolumn{2}{c}{\textbf{LLaDA-1.5}}\\
\cmidrule(lr){2-3}\cmidrule(l){4-5}
Supervision rule
& $\Delta\mathrm{NLL}\,\downarrow$ & $\Delta p\,\uparrow$
& $\Delta\mathrm{NLL}\,\downarrow$ & $\Delta p\,\uparrow$\\
\midrule
Uniform random & $-0.013$ & $+0.013$ & $-0.015$ & $+0.011$\\
$1-p$ & $\mathbf{-0.017}$ & $-0.013$ & $\mathbf{-0.019}$ & $-0.014$\\
$p(1-p)$ & $-0.001$ & $\mathbf{+0.052}$ & $-0.003$ & $\mathbf{+0.051}$\\
\addlinespace[3pt]
\rowcolor{goldrow}
Our priority $p(1-p)^3$ & $-0.016$ & $+0.032$ & $-0.018$ & $+0.030$\\
\bottomrule
\end{tabular}
\endgroup
\end{table}

\paragraph{Held-out results.}
Our priority improves both held-out NLL and average correct-token probability
more than uniform random selection on both backbones
(Table~\ref{tab:gm-training-study}). The alternatives favor different outcomes.
The $1-p$ rule reduces NLL slightly more but lowers mean correct-token
probability; $p(1-p)$ raises mean probability more but achieves much smaller
NLL reductions. Our priority provides a balance between the two metrics.

\subsection{Study 2: context-response profiles across models and corpora}
\label{app:gm-lambda-evidence}

GoldiMask uses target confidence to guide the allocation of revealed context.
We therefore examine how the benefit of additional context varies with a
target's initial confidence. Using 100 examples and four paired masking draws
per example, we evaluate each frozen model under two nested masks. For targets
that remain masked in both inputs, we measure
\begin{equation}
\Delta p=p_2-p_1,
\label{eq:gm-context-score}
\end{equation}
where $p_1$ and $p_2$ are the correct-token probabilities before and after
revealing additional context. We group these gains by initial confidence $p_1$.

Across the tested models and corpora, absolute probability gains peak at
intermediate confidence. The LLaDA-8B measurements illustrate a broad
high-response region. Average gains are $0.46$ for initial confidence
$0.05$--$0.10$, $0.51$ for $0.10$--$0.20$, and $0.49$ for
$0.20$--$0.30$, falling to $0.01$ above $0.90$. Thus, although the fitted
maximum lies near $0.16$, the neighboring bin containing $0.25$ retains nearly
the same average gain.

Table~\ref{tab:measurement} summarizes fitted peak locations across all four
model--corpus settings. The confidence intervals quantify uncertainty in the
peak location. Held-out $R^2$ measures how well a curve fitted on half the
examples describes the response profile on the other half, with $1$ indicating
a perfect fit.

\begin{table}[htb!]
\centering\small
\caption{Study 2: absolute probability gain indexed by pre-reveal confidence.
Peak intervals use an example-level bootstrap. The power-family fit is trained
on half the examples and evaluated on the other half.}
\label{tab:measurement}
\begin{tabular}{lcc}
\toprule
Model / corpus & Fitted peak [95\% CI] & Held-out $R^2$\\
\midrule
LLaDA-8B / s1K & $0.159\ [0.148,0.170]$ & 0.958\\
LLaDA-1.5 / s1K & $0.157\ [0.147,0.168]$ & 0.952\\
Dream-7B / s1K & $0.164\ [0.154,0.177]$ & 0.968\\
LLaDA-8B / 12K & $0.163\ [0.144,0.179]$ & 0.973\\
\bottomrule
\end{tabular}
\end{table}

These measurements support confidence-dependent context allocation. The
substantial gains across neighboring intermediate-confidence bins are
consistent with GoldiMask's smooth priority, which emphasizes responsive
targets while retaining a preference for remaining headroom.

\subsection{Study 3: diminishing returns from contextual support}
\label{app:gm-support}
\label{app:gm-function-controls}

Study 2 examines which targets respond to context. Here we test whether
prediction responses support GoldiMask's saturating contextual support score
$\phi(s)=s/(s+0.05)$ rather than a linear response.

\paragraph{Comparing targets across contexts.}
On LLaDA-8B, we evaluate eight random reveal sets per configuration over
25{,}217 targets, holding target masks, noise level, and reveal budget fixed.
For each target and reveal set, we record attention mass on revealed tokens
and the correct token's log-probability. Target-specific intercepts account
for fixed differences in difficulty. For held-out evaluation, we center
responses and predictors within each target cell, fit one slope per noise bin
on even-indexed examples, and evaluate on odd-indexed examples. We swap the
halves and average the two scores.

\paragraph{Evidence for saturation.}
With $s_0=0.05$, saturation improves held-out within-target $R^2$ from
$0.033$ to $0.041$ and from $0.082$ to $0.098$ in masking intervals
$0.50$--$0.71$ and $0.71$--$0.84$, respectively
(Table~\ref{tab:gm-support-study}). These scores measure explained variation
across contexts after removing target-specific means. The advantage is smaller
above $t=0.84$ and absent below $t=0.25$, supporting diminishing returns
particularly in the two reported intervals.

\paragraph{Support scale.}
The response $\phi(s)=s/(s+s_0)$ reaches half its maximum at $s=s_0$.
Fitted scales span $0.05$--$0.2$ across noise bins, placing our default $0.05$
at the lower end of the observed range. For equally prioritized targets,
$\phi$ rewards additional support more when existing support is smaller.

% Source: the support-curve statistics in appendix/measurement_protocols.tex.
% These are reported summaries; the raw measurements are not in this repository.
% Held-out fit scores and split protocol confirmed by the author on 2026-09-22.
% Author confirmed on 2026-09-25 that the default and empirical fits use
% s_0=0.05. Fit-parameter labels corrected; reported R^2 values unchanged.
\begin{table}[htb!]
\centering\small
\caption{\textbf{Study 3: contextual support and diminishing returns.}
Support fits on LLaDA-8B. Arrows compare held-out within-target $R^2$
for the linear and saturating responses, with $s_0=0.05$ for the latter.}
\label{tab:gm-support-study}
\setlength{\tabcolsep}{8pt}
\renewcommand{\arraystretch}{1.12}
\begin{tabular}{lr}
\toprule
Measurement & Result\\
\midrule
Held-out $R^2$, masking interval $0.50$--$0.71$ & $0.033\rightarrow 0.041$\\
Held-out $R^2$, masking interval $0.71$--$0.84$ & $0.082\rightarrow 0.098$\\
Fitted half-saturation scale $s_0$ across noise bins & $0.05$--$0.2$\\
\bottomrule
\end{tabular}
\end{table}

\begin{figure}[htb!]
\centering
\begin{tikzpicture}
\begin{axis}[
width=0.47\linewidth,height=3.5cm,
xlabel={Gold-token probability $p$},ylabel={Normalized priority},
xmin=0,xmax=1,ymin=0,ymax=1.05,
tick label style={font=\scriptsize},label style={font=\small},
legend style={font=\scriptsize,draw=none,at={(0.98,0.98)},anchor=north east}]
\addplot[goldline,thick,domain=0:1,samples=150]{256/27*x*(1-x)^3};
\addlegendentry{$p(1-p)^3$}
\addplot[gray,dashed,domain=0:1,samples=100]{4*x*(1-x)};
\addlegendentry{$p(1-p)$}
\addplot[gray,dotted,domain=0:1]{1-x};
\addlegendentry{$1-p$}
\end{axis}
\end{tikzpicture}\hfill
\begin{tikzpicture}
\begin{axis}[
width=0.47\linewidth,height=3.5cm,
xlabel={Attention support $s$},ylabel={Support response},
xmin=0,xmax=0.6,ymin=0,ymax=1.05,
tick label style={font=\scriptsize},label style={font=\small},
legend style={font=\scriptsize,draw=none,at={(0.98,0.05)},anchor=south east}]
\addplot[goldline,thick,domain=0:0.6,samples=100]{x/(x+0.05)};
\addlegendentry{$s_0=0.05$}
\addplot[gray,dashed,domain=0:0.6,samples=100]{x/(x+0.1)};
\addlegendentry{$s_0=0.1$}
\addplot[gray,dotted,domain=0:0.6,samples=100]{x/(x+0.2)};
\addlegendentry{$s_0=0.2$}
\end{axis}
\end{tikzpicture}
\caption{\textbf{GoldiMask's functional forms} (analytical, not empirical).
Left: priorities normalized to unit maximum. Right: support responses for
$s_0=0.05$ (default), $0.1$, and $0.2$. Empirical fits appear in
Table~\ref{tab:gm-support-study}.}
\label{fig:curves}
\end{figure}

\end{document}